\DeclareSymbolFont{symbolsC}{U}{pxsyc}{m}{n}
\DeclareMathSymbol{\medcirc}{\mathbin}{symbolsC}{7}
\newcommand{\defi}{\stackrel{\triangle}{=}}
\newtheorem{theorem}{Theorem}
\newtheorem{lemma}[theorem]{Lemma}
\newtheorem{proposition}[theorem]{Proposition}
\newtheorem{corollary}[theorem]{Corollary}
\newtheorem{claim}[theorem]{Claim}
\theoremstyle{definition}
\newtheorem{definition}[theorem]{Definition}
\newtheorem{example}[theorem]{Example}
\newtheorem{remark}[theorem]{Remark}
\newcommand\Real{\mathbb R}
\newcommand\prob{\mathbb P}
\newcommand\ex{\mathbb E}
\newcommand\dist{\mathcal D}
\newcommand\flag{f}
\newcommand\run{\rho}
\newcommand\runssmdp{\Omega}
\newcommand\runs{\mathrm{Runs}(S,A)}
\newcommand\fruns{\mathrm{Runs}_{\mathrm{fi}}(S,A)}
\newcommand\lab\lambda
\newcommand\defeq{:=}
\renewcommand\defi\defeq
\newcommand\nat{\mathbb N}
\newcommand\trans\Delta
\newcommand\policy\pi
\newcommand\mdp{\mathcal M}
\newcommand\dra{\mathcal A}
\newcommand\prodmdp{\mdp\otimes\dra}
\newcommand\rlavg{\vfun{\mdp}{\mathcal{R}^{\textrm{avg}}}}
\newcommand\rdisc[1]{\vfun{\mdp}{\rma^{#1}}}
\newcommand\rma{\mathcal R}
\newcommand\racc{\mathcal{J}^\mdp_{\dra}}
\newcommand\lfun\lambda 
\newcommand{\vfuns}{\mathcal{J}}
\newcommand{\vfun}[2]{\vfuns^{#1}_{#2}}
\newcommand{\rfun}{\mathcal{R}}
\newcommand\alphab{\mathcal{AP}}
\newcommand\spec{\mathcal{S}}
\newcommand{\opfun}[2]{\mathcal{J}^*(#1,#2)}
\newcommand{\oppol}[2]{\Pi_{\textrm{opt}}(#1,#2)}
\DeclareMathOperator{\inft}{InfSA}
\DeclareMathOperator{\sub}{sub}
\DeclareMathOperator{\sa}{sa}
\DeclareMathOperator{\act}{Act}
\DeclareMathOperator{\infs}{InfS}
\DeclareMathOperator{\infv}{InfV}
\newif\ifdraft
\renewcommand{\comm@todo@mpar}[1]{}
\def\divider{%
  \leavevmode\leaders\hrule height 0.6ex depth \dimexpr0.4pt-0.6ex\hfill%
  \kern0pt%
}
\title{Reinforcement Learning with LTL and $\omega$-Regular Objectives via Optimality-Preserving Translation to Average Rewards}
\author{%
Xuan-Bach Le$^{1*}$ \quad Dominik Wagner$^{1*}$ \\
\textbf{Leon Witzman}$^1$ \quad \textbf{Alexander Rabinovich}$^2$ \quad \textbf{Luke Ong}$^1$\\[3pt]
$^1$NTU Singapore \quad $^2$Tel Aviv University\\[3pt]
\texttt{\{bach.le,dominik.wagner,luke.ong\}@ntu.edu.sg}\\
\texttt{witz0001@e.ntu.edu.sg}\qquad
\texttt{rabinoa@tauex.tau.ac.il}
}
\begin{document}

	{\maketitle
	\def\thefootnote{*}\footnotetext{These authors contributed equally to this work.}
	}

	\begin{abstract}

    Linear temporal logic (LTL) and, more generally, $\omega$-regular objectives are alternatives to the traditional discount sum and average reward objectives in reinforcement learning (RL), offering the advantage of greater comprehensibility and hence explainability. 
    In this work, we study the relationship between these objectives. 
    Our main result is that each RL problem for $\omega$-regular objectives can be reduced to a limit-average reward problem in an optimality-preserving fashion, via (finite-memory) reward machines. 
    Furthermore, we demonstrate the efficacy of this approach by showing that optimal policies for limit-average problems can be found asymptotically by solving a sequence of discount-sum problems approximately. 
    Consequently, we resolve an open problem: optimal policies for LTL and $\omega$-regular objectives can be learned asymptotically.

	\end{abstract}

	\section{Introduction}
	Reinforcement learning (RL) is a machine learning paradigm whereby an agent aims to accomplish a task in a generally unknown environment \cite{Sutton:2018}. Traditionally, tasks are specified via a scalar reward signal obtained continuously through interactions with the environment. These rewards are aggregated over entire trajectories either through averaging or by summing the exponentially decayed rewards.
However, in many applications, there are no reward signals that can naturally be extracted from the environment.
Moreover, reward signals that are supplied by the user are prone to error in that the chosen low-level rewards often fail to accurately capture high-level objectives. 
Generally, policies derived from local rewards-based specifications are hard to understand because it is difficult to express or explain their global intent.

As a remedy, it has been proposed to specify tasks using formulas in Linear Temporal Logic (LTL) \cite{Wolff:12,Perez2024, Brazdil:14, Voloshin:22, FuT:14, Shao2023,Ding:14} or $\omega$-regular languages more generally \cite{Perez2024}. 
In this framework, the aim is to maximise the probability of satisfying a logical specification.
LTL can precisely express a wide range of high-level behavioural properties such as liveness (infinitely often $P$), safety (always $P$), stability (eventually always $P$), and priority ($P$ then $Q$ then $T$).

Motivated by this, a growing body of literature study learning algorithms for RL with LTL and $\omega$-regular objectives (e.g.\ \cite{Wolff:12,FuT:14,Perez2024,Bozkurt:2019, Sadigh:2014, Hosein:2023, Hasanbeig:2020, Gao:2019}).
However, to the best of our knowledge, all of these approaches may fail to learn provably optimal policies without prior knowledge of a generally unknown parameter. 
Moreover, it is known that neither LTL nor (limit) average reward objectives are PAC (probably approximately correct) learnable \cite{Alur:2022}. 
Consequently, approximately optimal policies can only possibly be found asymptotically but not in bounded time.
\footnote{Formally, for some $\epsilon,\delta>0$ it is impossible to learn $\epsilon$-approximately optimal policies with probability $1-\delta$ in finite time.} \lo{Citation?} \dw{This is immediate from the definition of PAC learnability. What do you want a reference for?}


In this work, we pursue a different strategy: rather than solving the RL problem directly, we study \emph{optimality-preserving} translations \cite{Alur:2022} from $\omega$-regular objectives to more traditional rewards, in particular, limit-average rewards. 
This method offers a significant advantage: it enables the learning of optimal policies for $\omega$-regular objectives by solving a single more standard problem, for which we can leverage existing off-the-shelf algorithms (e.g.~\cite{Kearns:2002,FuT:14,Perez2024}).
In this way, all future advances---in both theory and practice---for these much more widely studied problems carry over directly, whilst still enjoying significantly more explainable and comprehensible specifications.
It is well-known that such a translation from LTL to discounted rewards is impossible \cite{Alur:2022}. 
Intuitively, this is because the latter cannot capture infinite horizon tasks such as reachability or safety~\cite{Alur:2022, Yang2022,Hahn:2019}. 
Hence, we instead investigate translations to limit-average rewards in this paper.


\subsection*{Contributions}
We study reinforcement learning of $\omega$-regular and LTL objectives in Markov decision processes (MDPs) with unknown probability transitions, translations to limit-average reward objectives and learning algorithms for the latter.  In detail:



\begin{enumerate}[noitemsep]
	\item  We prove a negative result (\cref{thm:r1}): in general it is not possible to translate $\omega$-regular objectives to limit average objectives in an optimality-preserving manner if rewards are memoryless \changed[lo]{(i.e.,~independent of previously performed actions, sometimes called history-free} \changed[dw]{or Markovian)}. 
	\item On the other hand, our main result (\cref{thm:main}) resolves Open Problem 1 in \cite{Alur:2022}: such an optimality-preserving translation is possible if the reward assignment may use finite memory as formalised by reward machines~\cite{Icarte:PhD,Icarte:18a}. 

	 

	\item To underpin the efficacy of our reduction approach, we provide the first convergence proof (\cref{thm:avgconv}) of an RL algorithm (\cref{alg:avg}) for average rewards. To the best of our knowledge (and as indicated by~\cite{Dewanto:2021}), this is the first proof \emph{without assumptions on the induced Markov chains}. In particular, the result applies to multichain MDPs, which our translation generally produces, with unknown probability transitions. Consequently, we also resolve Open Problem 4 of \cite{Alur:2022}: RL for $\omega$-regular and LTL objectives can be learned in the limit (\cref{thm:ltlconv}).
\end{enumerate}


\paragraph{Outline.}
We start by reviewing the problem setup in \cref{sec:prelims}. Motivated by the impossibility result for simple reward functions, we define reward machines (\cref{sub:negative-results}).
In \cref{sec:grey} we build intuition for the proof of our main result in \cref{sec:general}. 
Thereafter, we demonstrate that RL with limit-average, $\omega$-regular and LTL objectives can be learned asymptotically (\cref{sec:lim-avg}).
Finally, we review related work and conclude in \cref{sec:related-work}.

	\section{Background}
	\label{sec:prelims}


\label{sub:prelim}

Recall that a \emph{Markov Decision Process (MDP)} is a tuple $\mdp = (S,A,s_0,P)$ where $S$ is a finite set of states, $s_0\in S$ is the initial state, $A$ is the finite set of actions and  $P: S \times A \times S \rightarrow [0,1]$ is the probability transition function such that  $\sum_{s' \in S} P(s,a,s') = 1$ for every $s \in S$ and $a \in A$. MDPs may be graphically represented; see e.g.\ \cref{fig:counter}. We let $\fruns = S \times (A \times S)^*$ and $\runs = (S \times A)^\omega$ denote the set of finite runs and the set of infinite runs in $\mdp$ respectively.


	A \emph{policy}  $\policy: \fruns \rightarrow \mathcal{D}(A)$ maps finite runs to distributions over actions. We let $\Pi(S,A)$ denote the set of all such policies. A policy $\policy$ is \emph{memoryless} if  $\policy(s_0a_0\ldots s_n) = \policy(s'_0a'_0\ldots s'_m)$ for all finite runs $s_0a_0\ldots s_n$ and $s'_0a'_0\ldots s'_m$ such that $s_n=s'_m$.
	For each MDP $\mdp$ and policy $\policy$, there is a natural induced probability measure $\dist_\policy^\mdp$ on its runs.

	The desirability of policies for a given MDP $\mdp$ can be expressed as a function $\vfuns:\Pi(S,A)\to\Real$. Much of the RL literature focuses on discounted-sum $\vfun{\mdp}{\mathcal{R}^{\gamma}}$ and limit-average reward objectives $\rlavg$, which lift a reward function $\rfun:S\times A\times S\to\Real$ for single transitions to runs $\run = s_0a_0s_1a_1\ldots$ as follows:
	\begin{align*}
		\rdisc\gamma(\policy) &\defi \ex_{\run \sim \dist_\policy^{\mdp}}\left[\sum_{i=0}^{\infty}~ \gamma^i\cdot r_i\right]&
		\rlavg(\policy) &\defi \liminf_{t \rightarrow \infty}{\ex_{\run \sim \dist_\policy^{\mdp}}\left[\frac{1}{t}\cdot \sum_{i=0}^{t-1}~r_i\right]}
	\end{align*}
	where $r_i=\mathcal{R}(s_i,a_i,s_{i+1})$ and $\gamma\in(0,1)$ is the \emph{discount factor}. 

\begin{figure}
	\centering
	\begin{subfigure}{0.43\linewidth}
		\centering	
		\begin{tikzpicture}
			[shorten >=1pt,node distance=2.3cm,on grid,auto]
			\node[state, initial] (s0) {$s_0$};
			\node[state,  right of=s0] (s1) {$s_1$};
			\path[->] 
			(s0) edge[loop above] node{$a$} (s0)
			(s0) edge[bend left, above] node{$b$} (s1)
			(s1) edge[bend left, below] node{$b$} (s0);
		\end{tikzpicture}
		\caption{An MDP where all transitions occur with probability $1$, $\lfun(s_0,b,s_1)=\{p\}$ and the rest are labeled with $\emptyset$.}
		\label{fig:counter}
	\end{subfigure}
	\hfill
	\begin{subfigure}{0.5\linewidth}
		\centering
	\begin{tikzpicture}
		[shorten >=1pt,node distance=2.3cm,on grid,auto]
		\node[state, initial] (s0) {$q_0$};
		\node[state, right of=s0] (s1) {$q_1$};
		\node[state,  right of=s1] (s2) {$q_2$};
		\path[->] 
		(s0) edge[loop above] node{$\emptyset$} (s0)
		(s0) edge[above] node{$\{p\}$} (s1)
		(s1) edge[loop above] node{$\emptyset$} (s1)
		(s1) edge[above] node{$\{p\}$} (s2)
		(s2) edge[loop above] node{$*$} (s2);
	\end{tikzpicture}
	\caption{A DRA, where $F\defeq\{(\{q_1\},\emptyset)\}$, for the objective to visit the petrol station $p$ exactly once.}
	\label{fig:dra}
	\end{subfigure}
	\caption{Examples of an MDP and DRA.}
	\label{fig:running}
\end{figure}
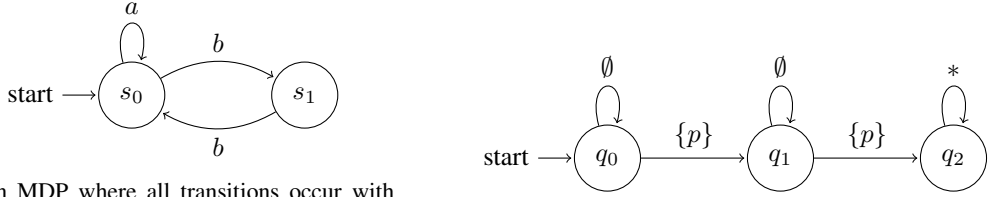

\paragraph{$\omega$-Regular Objectives.}
$\omega$-regular objectives (which subsume LTL objectives) are an alternative to these traditional objectives.
Henceforth,  we fix an alphabet $\alphab$ and a \emph{label function} $\lfun: S \times A \times S \rightarrow 2^\alphab$ for transitions, where $2^X$ is the power set of a set $X$.
Each run $\run = s_0a_0s_1a_1s_2\ldots$ induces a sequence of labels $\lfun(\run)=\lfun(s_0,a_0,s_1)\lfun(s_1,a_1,s_2)\ldots$.
Thus, for  a set $L\subseteq  (2^\alphab)^\omega$ of ``desirable'' label sequences we can consider the probability of a run's labels being in that set: $\prob_{\run\sim\dist_\policy^\mdp}[\lfun(\run)\in L]$.

\begin{example}
	\label{ex:running}
	For instance, an autonomous car may want to ``visit a petrol station exactly once'' to conserve resources~(e.g.~time or petrol). 
	Consider the MDP in~\cref{fig:counter} where the state $s_1$ represents a petrol station. We let $\alphab=\{p\}$ ($p$ for petrol),  $\lfun(s_0,b,s_1)=\{p\}$, and the rest are labeled with $\emptyset$. 
	The desirable label sequences are $L=\{\lambda_1\lambda_2\cdots\mid\text{ for exactly one }i\in\nat, \lambda_i=\{p\}\}$.	
\end{example}

In this work, we focus on $L$ which are $\omega$-regular languages. It is well known that $\omega$-regular languages are precisely the languages recognised by Deterministic Rabin Automata (DRA)~\cite{Khoussainov:2001,Kozen:2006}:

\begin{definition}\rm A DRA is a tuple $\mathcal{A} =  (Q,2^\alphab,q_0,\delta,F)$ where $Q$ is a finite state set, $2^\alphab$ is the alphabet, $q_0 \in Q$ is the initial state, $\delta: Q \times 2^\mathcal{AP} \rightarrow Q$ is the transition function, and $F = \{(A_1,R_1),\ldots,(A_n,R_n)\}$, where $A_i,R_i\subseteq Q$, is the accepting condition. Let $\run\in (2^\alphab)^\omega$ be an infinite run and $\infs(\run)$  the set of states visited infinitely often by $\run$. We say $\run$ is accepted by $\mathcal{A}$ if there exists some  $(A_i,R_i) \in F$ such that  $\run$ visits some state in $A_i$ infinitely often while visiting every states in $R_i$ finitely often, i.e. $\infs(\run) \cap A_i \neq \emptyset$ and $\infs(\run) \cap R_i = \emptyset$. 
\end{definition}
For example, the objective in \cref{ex:running} may be represented by the DRA in \cref{fig:dra}.

Thus, the desirability of  $\policy$ is the probability of $\policy$ generating an accepting sequence in the DRA $\mathcal{A}$:
\begin{align}
	\label{eq:obj}
	\vfun{\mdp}{\mathcal{A}}(\policy)~\defi~\prob_{\run\sim\dist_\policy^\mdp}[\lfun(\run)\text{ is accepted by the automaton }\dra]
\end{align}

\paragraph{Remarks.} The class of $\omega$-regular languages subsumes languages expressed by Linear Temporal Logic (LTL, see e.g.\ \cite[Ch.~5]{Baier:2008}), a logical framework in which e.g.\ reachability (eventually $P$, $\Diamond P$), safety (always $P$, $\Box P$) and reach-avoid (eventually $P$ whilst avoiding $Q$, $(\neg Q)\,\mathcal U\, P$) properties can be expressed concisely and intuitively. The specification of our running \cref{ex:running} to visit the petrol station exactly once can be expressed as the LTL formula $(\neg p)\,\mathcal U\, (p\land\medcirc\,\Box\neg p)$, where $\medcirc Q$ denotes ``$Q$ holds at the next step''. 
 Furthermore, our label function $\lfun$, which maps transitions to labels, is more general than other definitions~(e.g.~\cite{Wolff:12,FuT:14,Perez2024}) \changed[dw]{instead mapping states to labels.} \lo{Confusing to use $\lambda$ then $\lambda'$ in the preceding. Typo?} \dw{better?}
 As a result, we are able to articulate properties that involve actions, such as ``to reach the state $s$ while avoiding taking the action $a$''. 

	\paragraph{Optimality-Preserving Specification Translations.}
Rather than solving the problem of synthesising optimal policies for \cref{eq:obj} directly, we are interested in reducing it to more traditional RL problems and applying off-the-shelf RL algorithms to find optimal policies. To achieve this, the reduction needs to be \emph{optimality preserving}\footnote{This definition makes sense both for the case of reward functions $\rma_{(S,A,\lfun,\dra)}:S\times A\times S\to\Real$ and reward machines (introduced in the subsequent section).}:

\dw{What do you think of this modified definition?}
\lo{OK. But note that we have not defined reward machines yet, and the type of $\rma_{(S,A,\lfun,\dra)}$ is $S\times A\times S\to\Real$, which rules out memoryless reward machines.}
\dw{Is the clarifying footnote helpful?}
	\begin{definition}[\cite{Alur:2022}] \rm
		An \emph{optimality-preserving specification translation} from $\omega$-regular objectives to limit-average rewards is a computable function mapping each tuple $(S,A,\lfun,\dra)$ to $\rma_{(S,A,\lfun,\dra)}$ s.t. 
		\begin{quote}
			policies maximising $\rlavg$ also maximise $\racc$, where $\rma\defeq \rma_{(S,A,\lfun,\dra)}$
		\end{quote}
		for every MDP $\mdp=(S,A,s_0,P)$, label function $\lfun:S\times A\times S\to 2^{\mathcal{AP}}$ and DRA $\dra$.
	\end{definition} 
	We stress that since the probability transition function $P$ is generally not known, the specification translation may not depend on it.

	\section{Negative Result and Reward Machines}\label{sub:negative-results}
	


Reward functions emit rewards purely based on the transition being taken without being able to take the past into account, whilst DRAs have finite memory. Therefore, there cannot generally be optimality-preserving translations from $\omega$-regular objectives to limit average rewards provided by reward functions:
\begin{proposition}\label{thm:r1}
	There is an MDP $\mdp$ and an $\omega$-regular language $L$ for which it is impossible to find a reward function $\mathcal{R}:S \times A \times S \rightarrow\mathbb{R}$ such that every $\rlavg$-optimal policy of $\mdp$ also maximises the probability of membership in $L$.
\end{proposition}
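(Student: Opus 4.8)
The plan is to reuse the MDP $\mdp$ of \cref{fig:counter} together with the $\omega$-regular language $L$ of \cref{ex:running}, namely the set of label sequences in which the letter $\{p\}$ occurs exactly once. In $\mdp$ the letter $\{p\}$ is emitted only along the transition $s_0\xrightarrow{b}s_1$, so the (memory-using) policy that plays $b$ once at $s_0$, returns to $s_0$ via $b$, and plays $a$ forever thereafter generates this letter exactly once with probability $1$; hence $\sup_{\policy}\prob_{\run\sim\dist_\policy^\mdp}[\lfun(\run)\in L]=1$. The tension I want to exploit is that membership in $L$ is decided by a finite prefix of the label sequence, whereas $\rlavg$ is completely insensitive to finite prefixes.

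Concretely, I would argue as follows. Fix an arbitrary reward function $\rma\colon S\times A\times S\to\Real$ and let $\policy$ be any $\rlavg$-optimal policy of $\mdp$ (such a policy exists for finite MDPs by classical results; one may even take it stationary and deterministic). Define $\policy'$ to be the policy that first performs the deterministic four-step detour $s_0\xrightarrow{b}s_1\xrightarrow{b}s_0\xrightarrow{b}s_1\xrightarrow{b}s_0$ and then behaves like $\policy$ restarted from $s_0$. After the detour $\policy'$ induces exactly the same distribution on runs as $\policy$ does from $s_0$, so, writing $r_i\defeq\rma(s_i,a_i,s_{i+1})$ and letting $c$ be the (constant) reward sum along the detour, we have for every $t>4$ that
\[
\ex_{\run\sim\dist_{\policy'}^\mdp}\!\left[\frac1t\sum_{i=0}^{t-1}r_i\right]=\frac{c}{t}+\frac{t-4}{t}\cdot\ex_{\run\sim\dist_{\policy}^\mdp}\!\left[\frac{1}{t-4}\sum_{i=0}^{t-5}r_i\right].
\]
Since rewards are bounded and $c/t\to0$, $\tfrac{t-4}{t}\to1$, taking $\liminf_{t\to\infty}$ gives $\rlavg(\policy')=\rlavg(\policy)$, so $\policy'$ is $\rlavg$-optimal as well. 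But $\policy'$ visits the petrol station at least twice almost surely, hence $\prob_{\run\sim\dist_{\policy'}^\mdp}[\lfun(\run)\in L]=0<1$; thus $\policy'$ is an $\rlavg$-optimal policy that does not maximise the probability of membership in $L$. Since $\rma$ was arbitrary, no reward function can make every $\rlavg$-optimal policy optimal for $L$, which is the claim.

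I expect the only genuinely delicate point to be the $\liminf$ manipulation: one must verify that perturbing $\policy$ on a finite prefix really leaves the limit-average value unchanged even though the definition uses $\liminf$ rather than a true limit — this is exactly where boundedness of $\rma$ enters, ensuring the error terms $c/t$ and $\tfrac{4}{t}\,\ex[\cdots]$ vanish. The remaining ingredients — existence of an $\rlavg$-optimal policy in a finite MDP, well-definedness of the detour policy, and the observation that two forced visits already falsify ``exactly once'' — are routine.
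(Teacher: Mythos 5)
Your proof is correct, and it uses the same counterexample as the paper (the MDP of \cref{fig:counter} with the ``visit the petrol station exactly once'' language), but it closes the argument by a genuinely different mechanism. The paper's proof characterises the runs of any acceptance-maximising policy as $s_0(as_0)^nbs_1bs_0(as_0)^\omega$, reads off that its limit-average value must equal $\rma(s_0,a,s_0)$, and then exhibits the always-$a$ policy as a non-accepting policy attaining the same value. You instead start from an arbitrary $\rlavg$-optimal policy and prepend a finite detour that forces two visits to $s_1$ and returns to $s_0$; prefix-insensitivity of $\liminf$-average (your displayed identity plus boundedness of $\rma$) shows the perturbed policy is still $\rlavg$-optimal, while the double visit drops its acceptance probability to $0<1$. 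Your route has the advantage of never computing any reward values or characterising the structure of acceptance-optimal runs --- it isolates the single structural fact doing the work, namely that the set of $\rlavg$-optimal policies is closed under finite-prefix perturbations returning to $s_0$, whereas $L$-membership is not --- and it would transfer verbatim to any prefix-sensitive objective with this shape. Both arguments share the same two prerequisites, which you correctly flag: existence of an $\rlavg$-optimal (stationary) policy in a finite MDP, and the vanishing of the $c/t$ and $\tfrac{4}{t}$ error terms under the $\liminf$.
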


  Remarkably, this rules out optimality-preserving specification translations even if transition probabilities are fully known\footnote{In \cref{app:neg} we show another negative result (\cref{prop:fun-general}): even for a strict subset of $\omega$-regular specifications such translations are impossible.}. 
  
  \begin{proof} Consider the deterministic MDP in~\cref{fig:counter} and the objective of \cref{ex:running} ``to visit $s_1$ exactly once'' expressed by the DRA $\dra$ in \cref{fig:dra}. 
	Assume towards contradiction there exists a reward function $\rma:S\times A\times S\to\Real$ such that optimal policies w.r.t.\ $\rlavg$ maximise acceptance by $\dra$.
	Note that every policy $\pi^*$ maximising acceptance by the DRA induces the run $s_0(as_0)^nbs_1bs_0(as_0)^\omega$ for some $n\in\nat$, and $\racc(\pi^*) = 1$.  
	Thus, its limit-average reward is $\rlavg(\pi^*) = \mathcal{R}(s_0,a,s_0)$. Now, consider the policy $\pi$  always selecting action $a$ with probability $1$. As the run induced by $\pi$ is $s_0(as_0)^\omega$, we deduce that $\racc(\pi) = 0$ and $\rlavg(\pi) = \mathcal{R}(s_0,a,s_0) = \rlavg(\pi^*)$, which is a contradiction since $\pi$ is not $\racc$-optimal.
  \end{proof}

	Since simple reward functions lack the expressiveness to capture $\omega$-regular objectives, we employ a generalisation, reward machines \cite{Icarte:PhD,Icarte:18a}, whereby rewards may also depend on an internal state:

	
	\begin{definition} \rm A \emph{reward machine (RM)} is a tuple $\mathcal{R} = (U,u_0,\delta_u,\delta_r)$ where $U$ is a finite set of states, $u_0 \in U$ is the initial state, $\delta_r: U  \times (S \times A \times S) \rightarrow \mathbb{R}$ is the reward function, and $\delta_u:U \times (S \times A  \times S) \rightarrow U$ is the update function.
\end{definition}	

Intuitively, a RM $\mathcal{R}$ utilises the current transition to update its states through~$\delta_u$ and assigns the rewards through~$\delta_r$.
For example,~\cref{fig:rm} depicts a reward machine for the MDP of \cref{fig:counter}, where the states count the number of visits to $s_1$ (0 times, once, more than once).

	Let $\run = s_0a_0s_1\cdots$ be an infinite run. Since $\delta_u$ is deterministic, it induces a sequence $u_0u_1\ldots$ of states in $\rma$, where $e_i=(s_i,a_i,s_{i+1})$ and $u_{i+1} = \delta_u(u_i,e_i)$.
	The \emph{limit-average reward} of a policy $\pi$ is defined as:
	\begin{align*}
	\rlavg(\pi) ~\defi~ \liminf_{t \rightarrow \infty}{\ex_{\run \sim \dist_\pi^{\mdp}}\left[\frac{1}{t} \sum_{i=0}^{t-1}~\delta_r(u_i, e_i)\right]}
	\end{align*}
	It is seen that limit-average optimal policies $\policy^*$ for the MDP in \cref{fig:counter} and the RM in \cref{fig:rm} eventually select action $b$ exactly once in state $s_0$ to achieve $\rlavg(\policy^*)=1$.

	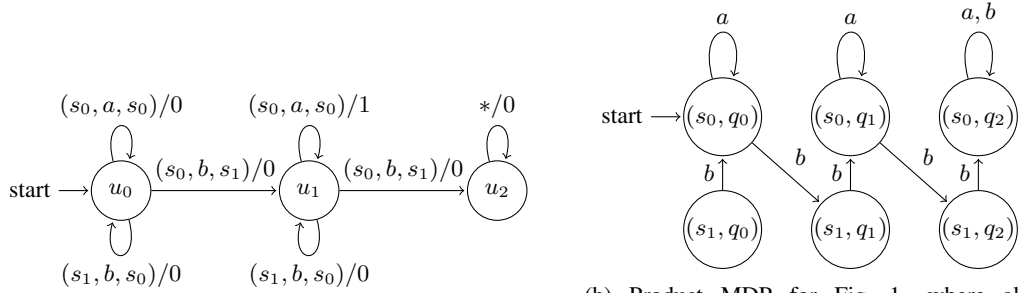
\begin{figure}
		\centering
		\begin{subfigure}{0.55\linewidth}
			\centering
			{\footnotesize
		\begin{tikzpicture}
			[shorten >=1pt,node distance=2.3cm,on grid,auto]
			\node[state, initial] (s0) {$u_0$};
			\node[state, right=25mm of s0] (s1) {$u_1$};
			\node[state,  right=25mm of s1] (s2) {$u_2$};
			\path[->] 
			(s0) edge[loop above] node{$(s_0,a,s_0)/0$} (s0)
			(s0) edge[loop below] node{$(s_1,b,s_0)/0$} (s0)
			(s0) edge[above] node{$(s_0,b,s_1)/0$} (s1)
			(s1) edge[loop above] node{$(s_0,a,s_0)/1$} (s1)
			(s1) edge[loop below] node{$(s_1,b,s_0)/0$} (s1)
			(s1) edge[above] node{$(s_0,b,s_1)/0$} (s2)
			(s2) edge[loop above] node{$*/0$} (s2);
		\end{tikzpicture}}
		\caption{A reward machine for the objective of visiting the petrol station exactly once. (The rewards are given following ``/''.)}
		\label{fig:rm}
		\end{subfigure}
		\hfill
		\begin{subfigure}{0.42\linewidth}
			\centering
			{\footnotesize
			\begin{tikzpicture}
				[shorten >=1pt,node distance=2.6cm,on grid,auto]
				\node[state, inner sep=0pt, initial] (s0q0) {$(s_0,q_0)$};
				\node[state, inner sep=0pt, right=17mm of s0q0] (s0q1) {$(s_0,q_1)$};
				\node[state, inner sep=0pt, right=17mm of s0q1] (s0q2) {$(s_0,q_2)$};
				\node[state, inner sep=0pt, below=15mm of s0q0] (s1q0) {$(s_1,q_0)$};
				\node[state, inner sep=0pt, below=15mm of s0q1] (s1q1) {$(s_1,q_1)$};
				\node[state, inner sep=0pt, below=15mm of s0q2] (s1q2) {$(s_1,q_2)$};
				\path[->] 
				(s1q0) edge node{$b$} (s0q0)
				(s0q0) edge[loop above] node{$a$} (s0q0)
				(s0q0) edge node{$b$} (s1q1)
				(s1q1) edge node{$b$} (s0q1)
				(s0q1) edge[loop above] node{$a$} (s0q1)
				(s0q1) edge node{$b$} (s1q2)
				(s1q2) edge node{$b$} (s0q2)
				(s0q2) edge[loop above] node{$a,b$} (s0q2)
				;
			\end{tikzpicture}}
			\caption{Product MDP for \cref{fig:running}, where all transitions have probability $1$ and $F_\mdp\defeq\{(\{(s_0,q_1),(s_1,q_1)\},\emptyset)\}$.}
			\label{fig:prodmdp}
		\end{subfigure}
		\caption{A reward machine and the product MDP for the running \cref{ex:running}.}
	\end{figure}

	In the following two sections, we present a general translation from $\omega$-regular languages to  limit-average reward machines, and we show that our translation is optimality-preserving (\cref{thm:main}).

\paragraph{Remarks.} Our definition of RM is more general than the one presented in~\cite{Icarte:PhD,Icarte:18a}, where $\delta'_u: U\rightarrow [S \times A \times S \rightarrow \mathbb{R}]$ and $\delta'_r:U \times 2^\alphab \rightarrow U$. Note that $(\delta'_u,\delta'_r)$ can be reduced to $(\delta_u,\delta_r)$ by expanding the state space of the RM to include the previous state and utilising the inverse label function~$\lfun^{-1}$. It is worth pointing out that \cref{thm:main} does not contradict a negative result in~\cite{Alur:2022} regarding the non-existence of an optimality-preserving translation from LTL constraints to \emph{abstract} limit-average reward machines (where only the \emph{label} of transitions is provided to $\delta_u$ and $\delta_r$).

	\section{Warm-Up: Transitions with Positive Probability are Known}


	\label{sec:grey}

To help the reader gain intuition about our construction, we first explore the situation where the support $\{(s,a,s')\in S\times A\times S\mid P(s,a,s')>0\}$ of the MDP's transition function is known. Crucially, we do not assume that the \emph{magnitude} of these (non-zero) probabilities are known. Subsequently, in \cref{sec:general}, we fully eliminate this assumption. 

This assumption allows us to draw connections between our problem and a familiar scenario in probabilistic model checking \cite[Ch.~10]{Baier:2008}, where the acceptance problem for $\omega$-regular objectives can be transformed into a reachability problem. Intuitively, our reward machine monitors the state of the DRA and provides reward $1$ if the MDP and the DRA are in certain ``good'' states ($0$ otherwise).


For the rest of this section, we fix an MDP without transition function $(S,A,s_0)$, a set of possible transitions $E\subseteq S\times A\times S$, a label function $\lfun:S\times A\times S\to 2^{\alphab}$ and a DRA $\dra=(Q,2^\mathcal{AP},q_0,\delta,F)$. Our aim is to find a reward machine $\rma$ such that for every transition function $P$ compatible with $E$ (formally: $E=\{(s,a,s')\mid P(s,a,s')>0\}$), optimal policies for limit-average rewards are also optimal for the acceptance probability of the DRA $\dra$.

	
	 \subsection{Product MDP and End Components}
	First, we form the \emph{product MDP} $\prodmdp$ (e.g.~\cite{Wolff:12,FuT:14}), which synchronises the dynamics of the MDP $\mdp$ with the DRA $\dra$. Formally, $\prodmdp = (V,A,v_0,\Delta,F_{\mdp})$ where $V = S \times Q$ is the set of states, $A$ is the set of actions, $v_0 = (s_0,q_0)$ is the initial state. The transition probability function $\Delta: V \times A \times V \rightarrow [0,1] $ satisfies $\Delta(v,a,v') = P(s,a,s')$ given that $v = (s,q)$, $v' = (s',q')$, and $\delta(q,\lfun(s,a,s')) = q'$. The accepting condition is $F_{\mdp} = \{(A_1',R'_1),(A_2',R'_2),\ldots\}$ where $A'_i = S \times A_i$, $B'_i = S \times B_i$, and $(A_i,B_i) \in F$.  
	A run $\run=(s_0,q_0)a_0\cdots$ is accepted by $\prodmdp$ if there exists some  $(A'_i,R'_i) \in F_{\mdp}$ such that  $\infv(\run) \cap A'_i \neq \emptyset$ and $\infv(\run) \cap R'_i = \emptyset$, where $\infv$ is the set of states $(s,v)$ in the product MDP visited infinitely often by $\run$.
	
Note that product MDPs have characteristics of both MDPs and DRAs which neither possesses in isolation: transitions are generally probabilistic and there is a notation of acceptance of runs.
	For example, the product MDP for \cref{fig:running} is shown in \cref{fig:prodmdp}.
	Due to the deterministic nature of the DRA $\dra$, every run $\run$ in $\mdp$ gives rise to a unique run $\run^\otimes$ in $\prodmdp$. Crucially, for every policy $\policy$,
		\begin{align}
			\label{eq:prodmdp}
			\prob_{\run\sim\dist_\pi^\mdp}[\run\text{ is accepted by }\dra]=\prob_{\run\sim\dist_\pi^\mdp}[\run^\otimes\text{ is accepted by }\prodmdp]
		\end{align}

	We make use of well-known almost-sure characterisation of accepting runs via the notion of accepting end components:
	
	\begin{definition} \rm An \emph{end component} (EC) of $\prodmdp = (V,A,v_0,\Delta,F_{\mdp})$ is   a pair  $(T,\act)$ where $T \subseteq V$ and $\act:T \rightarrow 2^{A}$ satisfies the following conditions
		\begin{enumerate}[noitemsep]
			\item For every $v \in T$ and $a \in\act(v)$, we have $\sum_{v' \in T} \Delta(v,a,v') = 1$, and
			\item The graph $(T,\rightarrow_{\act})$ is strongly connected, where $v \rightarrow_{\act} v'$ iff $\Delta(v,a,v') >~0$ for some $a\in\act(v)$.
		\end{enumerate}	
		$(T,\act)$ is an \emph{accepting EC (AEC)} if  $T \cap A'_i \neq \emptyset$ and $T \cap B'_i = \emptyset$ for some $(A'_i,B'_i) \in F_{\mdp}$.
	\end{definition}
	Intuitively, an EC is a strongly connected sub-MDP.
	For instance, for the product MDP in \cref{fig:prodmdp} there are five end components, $(\{(s_0,q_0)\},(s_0,q_0)\mapsto\{a\})$, $(\{(s_0,q_1)\},(s_0,q_1)\mapsto\{a\})$, $(\{(s_0,q_2)\},(s_0,q_2)\mapsto\{a\})$, $(\{(s_0,q_2)\},(s_0,q_2)\mapsto\{b\})$  and $(\{(s_0,q_2)\},(s_0,q_2)\mapsto\{a,b\})$. $(\{(s_0,q_1)\},(s_0,q_1)\mapsto\{a\})$ is its only accepting end component.
	
	It turns out that, almost surely, a run is accepted iff it enters an accepting end component and never leaves it \cite{Alfaro:PhD}.
	Therefore, a natural idea for a reward machine is to use its state to keep track of the state $q\in Q$ the DRA is in and give reward 1 to transitions $(s,a,s')$ if $(s,q)$ is in some AEC (and $0$ otherwise). 
Unfortunately, this approach falls short since the AEC may contain non-accepting ECs, thus assigning maximal reward to sub-optimal policies.\footnote{To illustrate this point, consider the product MDP $(\{s_0,s_1\},\{a,b\},s_0,P,F)$ where $P(s_0,b,s_0) = P(s_0,a,s_1) = P(s_1,a,s_0) = 1$ and $F=\{(\{s_1\},\emptyset)\}$, i.e. the objective is to visit~$s_1$ infinitely often.}
As a remedy, we introduce a notion of minimal AEC, and ensure that only runs eventually committing to one such minimal AEC get a limit-average reward of 1.

\begin{definition}
	An AEC $(T,\act)$ is an \emph{accepting simple EC (ASEC)} if $|\act(v)| = 1$ for every $v \in T$.
\end{definition}



Let $\mathcal C_1=(T_1,\act_1),\ldots,\mathcal C_n=(T_n,\act_n)$ be a collection of ASECs covering all states in ASECs, i.e.\ if $(s,q)$ is in some ASEC then $(s,q)\in {T_1 \cup {\cdots} \cup T_n}$. In particular, $n\leq |S\times Q|$ is sufficient.

We can prove that every AEC contains an ASEC (see \cref{lem:asec} in \cref{app:grey}). Consequently,
	\begin{restatable}{lemma}{asecsuff}
		\label{lem:asecsuff}
		Almost surely, if $\run$ is accepted by $\dra$ then $\run^\otimes$ reaches a state in some ASEC $\mathcal C_i$ of $\prodmdp$.
	\end{restatable}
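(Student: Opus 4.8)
The plan is to combine the classical almost-sure characterisation of the long-run behaviour of runs in a finite MDP with \cref{lem:asec}. Fix a transition function $P$ compatible with $E$ and a policy $\policy$; all probabilistic statements below are with respect to $\dist_\policy^\mdp$, and since the $\prodmdp$-state is a deterministic function of the history in $\mdp$, the map $\run\mapsto\run^\otimes$ pushes $\dist_\policy^\mdp$ forward to the run distribution of an induced policy on $\prodmdp$.

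First I would invoke the end-component theorem of de Alfaro (see \cite{Alfaro:PhD} and \cite[Ch.~10]{Baier:2008}): for almost every run $\run$, the pair $(\infv(\run^\otimes),\act_\run)$, where $\act_\run(v)$ collects the actions taken from $v$ infinitely often along $\run^\otimes$, is an end component of $\prodmdp$, and moreover $\run^\otimes$ visits every state of $\infv(\run^\otimes)$ infinitely often. Write $(T_\run,\act_\run)$ for this end component. Because the Rabin acceptance condition depends only on the set of states visited infinitely often, and that set equals $T_\run$, on this probability-one event $\run^\otimes$ is accepted by $\prodmdp$ exactly when $(T_\run,\act_\run)$ is an AEC. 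Combining this with \eqref{eq:prodmdp} (acceptance of $\run$ by $\dra$ is equivalent to acceptance of $\run^\otimes$ by $\prodmdp$), we get: almost surely, if $\run$ is accepted by $\dra$ then $(T_\run,\act_\run)$ is an AEC.

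Next, working on the same probability-one event, suppose $\run$ is accepted, so $(T_\run,\act_\run)$ is an AEC. By \cref{lem:asec} (proved in \cref{app:grey}), every AEC contains an ASEC, say $(T',\act')$ with $\emptyset\neq T'\subseteq T_\run$. Pick any $v\in T'$. Then $v$ is a state lying in an ASEC, so by the covering property of $\mathcal C_1=(T_1,\act_1),\ldots,\mathcal C_n=(T_n,\act_n)$ we have $v\in T_i$ for some $i$, i.e.\ $v$ is a state of $\mathcal C_i$. Since $v\in T_\run=\infv(\run^\otimes)$, the run $\run^\otimes$ visits $v$ infinitely often and in particular reaches a state of $\mathcal C_i$. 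As this holds for almost every run, the lemma follows.

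I expect the only delicate point to be the first step: one must be careful to apply the end-component theorem to history-dependent policies on $\prodmdp$ and to use that the acceptance condition factors through $\infv$, so that the event ``$(T_\run,\act_\run)$ is an AEC whenever $\run$ is accepted'' genuinely has probability one. The conditioning on acceptance then requires no further care, since we argue pointwise on a fixed probability-one event, and everything after that is bookkeeping with \cref{lem:asec} and the covering family.
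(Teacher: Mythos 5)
Your proposal is correct and follows essentially the same route as the paper's own proof: invoke the almost-sure end-component theorem (\cref{lem:ec}), note that acceptance forces the limiting end component $(\infv(\run^\otimes),\act_\run)$ to be an AEC, extract an ASEC via \cref{lem:asec}, and finish with the covering property of $\mathcal C_1,\ldots,\mathcal C_n$. Your extra care about pushing the measure to the product MDP and about acceptance factoring through $\infv$ only makes explicit what the paper leaves implicit; no gap.
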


	\subsection{Reward Machine and Correctness}
	\label{sec:rmc}

	Next, to ensure that runs eventually commit to one such ASEC we introduce the following notational shorthand:
	for $(s,q)\in {T_1 \cup \cdots \cup T_n}$, let $\mathcal C_{(s,q)}=(T_{(s,q)},\act_{(s,q)})$ be the $\mathcal C_i$ with minimal $i$ containing $(s,q)$, i.e. $C_{(s,q)}\defeq C_{\min\{1\leq i\leq n\mid (s,q)\in T_i\}}$. 

Intuitively, we give a reward of 1 if $(s,q)$ is in one of the $\mathcal C_1,\ldots,\mathcal C_n$. However, once an action is performed which deviates from $\act_{(s,q)}$ no rewards are given thereafter, thus resulting in a limit average reward of $0$.

A state in the reward machine has the form $q\in Q$, keeping track of the state in the DRA, or $\bot$, which is a sink state signifying that in a state in $\mathcal C_1,\ldots,\mathcal C_n$ we have previously deviated from $\act_{(s,q)}$.

Finally, we are ready to formally define the reward machine $\rma=\rma_{(S,A,\lfun,\dra)}$ exhibiting our specification translation as $(Q\cup\{\bot\},q_0,\delta_u,\delta_r)$, where
\begin{align*}
	\delta_u(u,(s,a,s'))&\defeq
	\begin{cases}
		\bot&\text{if }u=\bot\text{ or}\\
		&\left((s,u)\in T_1 \cup\cdots\cup T_n\text{ and }a\not\in \act_{(s,u)}(s,u)\right)\\
		\delta(u,\lfun(s,a,s'))&\text{otherwise}
	\end{cases}\\
	\delta_r(u,(s,a,s'))&\defeq
	\begin{cases}
		1&\text{if }u\neq\bot\text{ and }(s,u)\in T_1\cup\cdots\cup T_n\\
		0&\text{otherwise}
	\end{cases}
\end{align*}

For our running example, this construction essentially yields the reward machine in \cref{fig:rm} (with some inconsequential modifications cf.~\cref{fig:construction} in \cref{app:grey}).

\begin{theorem}
	\label{thm:grey}
	For all transition probability functions $P$ with support $E$, policies maximising the limit-average reward w.r.t.\  $\rma$ also maximise the acceptance probability of the DRA $\dra$.
\end{theorem}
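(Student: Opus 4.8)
The plan is to reduce the statement to two facts about the reward machine $\rma$, where I write $G\defeq T_1\cup\cdots\cup T_n$ for the set of product states lying in some ASEC and $p^*\defeq\sup_\pi\racc(\pi)$: \textbf{(i)}~$\rlavg(\pi)\le\racc(\pi)$ for every policy $\pi$; and \textbf{(ii)}~there is a policy $\hat\pi$ with $\rlavg(\hat\pi)\ge p^*$. These suffice: (i) gives $\sup_\pi\rlavg(\pi)\le p^*$ and (ii) the reverse inequality, so every $\rlavg$-optimal $\pi^*$ satisfies $\racc(\pi^*)\ge\rlavg(\pi^*)=p^*\ge\racc(\pi^*)$, hence is $\racc$-optimal. (It also follows that the two optimal values coincide and that $\hat\pi$ is optimal for both.)

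The first step is to describe the reward stream precisely. Fix $\pi$ and a $\dist_\pi^\mdp$-almost-every run $\run$, with lift $\run^\otimes=v_0v_1\cdots$ in $\prodmdp$ and induced $\rma$-state sequence $u_0u_1\cdots$. If $\rma$ ever enters the sink $\bot$, then $\delta_r(u_i,e_i)=0$ thereafter, so the empirical averages tend to $0$. Otherwise $u_i=q_i$ for all $i$, the step-$i$ reward equals $\mathds{1}[v_i\in G]$, and at any $v_i\in G$ the only action that does not send $\rma$ to $\bot$ is the single action $\act_{v_i}(v_i)$ prescribed by the simple component $\mathcal C_{v_i}$; since we are in the $\bot$-free case this action is played, and the end-component property forces $v_{i+1}\in T_{v_i}\subseteq G$. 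Hence once $\run^\otimes$ reaches $G$ it stays there, and the empirical averages tend to $\mathds{1}[\run^\otimes\text{ reaches }G]$. Thus $\frac{1}{t}\sum_{i<t}\delta_r(u_i,e_i)\to\mathds{1}_W$ pointwise a.e., where $W$ is the event ``$\rma$ never visits $\bot$ \emph{and} $\run^\otimes$ reaches $G$''; bounded convergence yields the identity $\rlavg(\pi)=\prob_{\run\sim\dist_\pi^\mdp}[W]$.

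The engine of both (i) and (ii) is the greedy memoryless policy $\sigma$ on $G$ given by $\sigma(v)\defeq\act_i(v)$ for the least $i$ with $v\in T_i$; by the end-component property it keeps $\prodmdp$ inside $G$, so it induces a finite Markov chain on $G$. The key lemma I would prove is that \emph{every bottom strongly connected component $B$ of this chain equals some $T_{i^*}$, with $\sigma|_B=\act_{i^*}|_{T_{i^*}}$, so that $(B,\,v\mapsto\{\sigma(v)\})=\mathcal C_{i^*}$ is an ASEC} --- in particular accepting. (Sketch: let $i^*$ be least among $\{\min\{i:v\in T_i\}\mid v\in B\}$; then $B\cap T_{i^*}$ is a nonempty subset of the strongly connected $\mathcal C_{i^*}$ that is closed under $\act_{i^*}$, hence equals $T_{i^*}$; and $T_{i^*}$ is $\sigma$-closed and contained in the bottom SCC $B$, forcing $B=T_{i^*}$.) Granting this: on the event $W$ the run eventually remains in $G$ and thereafter plays $\sigma$, so almost surely the set of states visited infinitely often by $\run^\otimes$ is some bottom SCC $B$; since $B$ is accepting, $W$ lies, up to a null set, inside $\{\run\text{ is accepted}\}$, giving $\rlavg(\pi)=\prob[W]\le\racc(\pi)$ and hence (i). For (ii), let $\tau$ be a memoryless policy maximising $\prob_\pi[\run^\otimes\text{ reaches }G]$ with value $p_G$, and let $\hat\pi$ play $\tau$ until $G$ is first entered and $\sigma$ thereafter; $\hat\pi$ never triggers a move to $\bot$, so by the identity above $\rlavg(\hat\pi)=\prob_{\hat\pi}[\run^\otimes\text{ reaches }G]=p_G$. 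Finally \cref{lem:asecsuff} gives $\racc(\pi)\le\prob_\pi[\run^\otimes\text{ reaches }G]\le p_G$ for every $\pi$, so $p^*\le p_G=\rlavg(\hat\pi)$, which is (ii).

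The main obstacle is the key lemma on bottom SCCs of the greedy chain: it is what makes the ``minimal-index ASEC'' device sound, reconciling the fact that the covering components $\mathcal C_1,\dots,\mathcal C_n$ may overlap --- so greedy play can jump between them --- with the requirement that greedy play never settles into a non-accepting simple end component. The remaining ingredients (reducing acceptance to reachability of $G$, the pointwise convergence of the empirical averages, and the existence of a memoryless reachability-optimal $\tau$) are standard.
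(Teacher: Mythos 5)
Your proof is correct and follows essentially the same route as the paper: the same two inequalities ($\rlavg(\pi)\le\racc(\pi)$ for every policy, and a surgically modified policy whose limit-average reward dominates the optimal acceptance probability), the same surgery (follow the original/reachability policy until entering $G=T_1\cup\cdots\cup T_n$, then play the unique permitted action $\act_{(s,q)}(s,q)$), and the same appeal to \cref{lem:asecsuff}. The one place you go beyond the paper's write-up is your bottom-SCC lemma for the greedy chain on $G$: the paper handles the issue of overlapping $\mathcal C_i$'s only in an informal parenthetical, whereas your lemma (every bottom SCC of the $\sigma$-chain equals some $T_{i^*}$ with $\sigma$ restricted to it agreeing with $\act_{i^*}$, hence is an accepting simple end component) makes that step fully rigorous.
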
	
This result follows immediately from the following (the full proof is presented in \cref{app:grey}):


\begin{restatable}{lemma}{corgrey}
	\label{lem:corgrey}
	Let $P$ be a probability transition function with support $E$ and $\mdp\defeq (S,A,s_0,P)$.
	\begin{enumerate}[noitemsep,topsep=0pt]
\item For every policy $\policy$, $\rlavg(\pi)\leq\racc(\pi)$.
\item For every policy $\policy$, there exists some policy $\policy'$ satisfying
$\racc(\policy)\leq\rlavg(\policy')$.
	\end{enumerate}
\end{restatable}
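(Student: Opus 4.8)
The plan is to prove the two inequalities separately, both leveraging the almost-sure characterisation of acceptance via end components. For part (1), I would fix a policy $\policy$ and analyse the induced run $\run^\otimes$ in the product MDP. The key observation is that the reward machine state is $\bot$ exactly when the run has, at some past point, been in a state $(s,u) \in T_1 \cup \cdots \cup T_n$ and taken an action outside $\act_{(s,u)}(s,u)$; once in $\bot$ all rewards are $0$ forever, so the limit-average reward of such a run is $0$. On the complementary event, the run eventually stays forever within states belonging to $\mathcal C_1, \ldots, \mathcal C_n$ while only ever following the singleton action of the relevant ASEC $\mathcal C_{(s,q)}$ — so it converges into and commits to a single ASEC (one must check that if you stay in the union of ASECs following their prescribed actions without ever deviating, you settle in one fixed $\mathcal C_i$, which follows because the $\mathcal C_{(s,q)}$ assignment is by minimal index and each ASEC is closed under its own action). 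On that event the limit-average reward is $1$ and the run is accepted by $\prodmdp$ (hence by $\dra$ via \cref{eq:prodmdp}), because an ASEC is in particular an accepting EC and the run visits all its states infinitely often. Thus $\rlavg(\policy) = \prob[\text{run commits to some }\mathcal C_i] \leq \prob[\run^\otimes \text{ accepted}] = \racc(\policy)$.

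For part (2), given $\policy$ I would construct $\policy'$ by "surgery": run $\policy$ until the induced product run first reaches a state $(s,q)$ lying in some ASEC $\mathcal C_i$ — by \cref{lem:asecsuff}, conditioned on $\run$ being accepted by $\dra$ this happens almost surely — and from that point on switch to the memoryless policy that plays the unique action $\act_{(s,q)}(s,q)$ prescribed by $\mathcal C_{(s,q)}$, staying within that ASEC forever. On the event that $\policy$'s run is accepted, $\policy'$ reaches some ASEC and then earns reward $1$ at every subsequent step (the reward machine never enters $\bot$ after commitment, since $\policy'$ follows $\act_{(s,q)}$ exactly), so the limit-average reward along that trajectory is $1$; on the complementary event we simply lower-bound the reward by $0$. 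Hence $\rlavg(\policy') \geq \prob[\run \sim \dist_\policy^\mdp \text{ accepted by }\dra] = \racc(\policy)$. A small subtlety: $\policy'$ must be a well-defined element of $\Pi(S,A)$ — it is, since "run $\policy$ until the first hitting time of the ASEC-region, then follow a fixed memoryless rule" is a perfectly valid history-dependent policy; and the hitting time is measurable.

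The main obstacle I anticipate is making the "commitment to a single ASEC" argument fully rigorous in part (1): one needs that a run which forever remains in $T_1 \cup \cdots \cup T_n$ and never triggers the $\bot$-transition actually has its infinitely-often-visited states equal to (a subset giving acceptance within) exactly one $\mathcal C_i$, rather than oscillating between overlapping ASECs. This requires using that $\delta_u$ sends you to $\bot$ the moment you leave the action-support $\act_{(s,u)}(s,u)$ of the minimal-index ASEC through $(s,u)$, so that a non-$\bot$ run is pinned to following $\mathcal C_{(s,q_0')}$'s dynamics from the first ASEC-state it hits onward — and an ASEC followed under its own (deterministic choice of) actions is an absorbing, strongly connected sub-MDP, so the run visits precisely its states infinitely often. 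The measure-theoretic bookkeeping (that these are genuine events, that "eventually always in the ASEC region" has the stated probability) is routine given standard MDP/end-component theory and I would relegate it to the appendix.
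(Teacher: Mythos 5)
Your proposal is correct and follows essentially the same route as the paper's proof: part (1) characterises the runs with runwise limit-average reward $1$ as exactly those that reach the ASEC region and never trigger the $\bot$-transition (hence commit to some $\mathcal C_i$, using the minimal-index bookkeeping, and are accepted almost surely), and part (2) performs the same surgery on $\policy$ at the first hitting time of $T_1\cup\cdots\cup T_n$, invoking \cref{lem:asecsuff} and the fact that $\policy$ and $\policy'$ agree up to that hitting time. The only step you leave implicit — that $\rlavg(\policy)$, defined as a liminf of expectations, equals the probability of the commitment event — is handled in the paper by dominated convergence since the runwise average converges to $0$ or $1$ almost surely (\cref{eq:exliminf}), and is indeed routine.
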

\begin{proof}[Proof sketch]
1. By construction, every run receiving a limit-average reward of $1$, must have entered some ASEC $\mathcal C_i$ and never left it. Furthermore, almost surely all states are visited infinitely often and the run is accepted by definition of accepting ECs. 

2. By \cref{lem:asecsuff}, almost surely, a run is only accepted if it enters some $\mathcal C_i$. We set $\policy'$ to be the policy agreeing with $\policy$ until reaching one of the $\mathcal C_1,\ldots,\mathcal C_n$ and henceforth following the action $\act_{(s_t,q_t)}(s_t,q_t)$, where $q_t$ is the state of the DRA at step $t$, yielding a guaranteed limit-average reward of $1$ for the run by construction.
		\qedhere
\end{proof}

\changed[dw]{
\begin{remark}
		Our construction considers a collection of ASECs covering all states in ASECs. Whilst it does not necessarily require listing all possible ASECs but only (up to) one ASEC per state, it is unclear whether this can be obtained in polynomial time. In \cref{sec:eff}, we present an alternative (yet more complicated) construction which has polynomial time complexity.
\end{remark}
}

	\section{Main Result}
    \label{sec:general}

In this section, we generalise the approach of the preceding section to prove our main result:
\begin{restatable}{theorem}{main}
	\label{thm:main}
There exists an optimality-preserving translation from $\omega$-regular languages to limit-average reward machines.	
\end{restatable}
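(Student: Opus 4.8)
The plan is to lift the warm-up construction of \cref{sec:grey} to the case of an unknown support $E=\{(s,a,s')\mid P(s,a,s')>0\}$ by letting the reward machine \emph{learn} it from the run it observes. The machine $\rma=\rma_{(S,A,\lfun,\dra)}$ will carry, besides the current DRA state $q$ and an ``ok/$\bot$'' flag, the set $\hat E\subseteq S\times A\times S$ of transitions seen so far. On reading $(s,a,s')$ it adds the transition to $\hat E$; if $\hat E$ strictly grew it resets the flag to ``ok''; it recomputes---\emph{with respect to $\hat E$ in place of $E$}---a covering family $\mathcal C_1^{\hat E},\dots,\mathcal C_n^{\hat E}$ ($n\le|S\times Q|$) of ASECs of the product MDP exactly as in \cref{sec:rmc}; it emits reward $1$ iff the flag is ``ok'' and the current product state $(s,q)$ lies in $\bigcup_i T_i^{\hat E}$; and it sets the flag to $\bot$ if $(s,q)$ is covered but $a$ deviates from the prescribed action $\act^{\hat E}_{(s,q)}(s,q)$. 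This $\rma$ is obtained by a computable function of $(S,A,\lfun,\dra)$ (its state set embeds into $(Q\cup\{\bot\})\times 2^{S\times A\times S}$; the polynomial-state variant is relegated to \cref{sec:eff}). The structural fact driving the proof: since $\hat E$ is non-decreasing in a finite lattice, along every infinite run it stabilizes after finitely many steps to some $\hat E_\infty\subseteq E$, after which $\rma$ behaves exactly like the \cref{sec:grey} machine for support $\hat E_\infty$ and every transition subsequently taken already lies in $\hat E_\infty$.

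I would then prove the analogue of \cref{lem:corgrey} for arbitrary $P$ with support $E$ and $\mdp=(S,A,s_0,P)$. \emph{Part 1} ($\rlavg(\pi)\le\racc(\pi)$ for every $\pi$): a run contributing full limit-average reward must, after stabilization of $\hat E$, avoid $\bot$ and dwell with density $1$ in covered product states; since in the tail it only plays prescribed actions and only takes transitions in $\hat E_\infty$, the closure property of the relevant $\mathcal C_i^{\hat E_\infty}=(T,\act)$ keeps the run inside $T$ forever, strong connectivity makes it visit all of $T$ infinitely often, and since the index of the covering ASEC can only decrease it stabilizes; hence $\infv(\run^\otimes)$ is a single accepting $T$ and by \cref{eq:prodmdp} the run is accepted. \emph{Part 2} (for every $\pi$ there is $\pi'$ with $\racc(\pi)\le\rlavg(\pi')$, where $\pi'$ may depend on $P$): by the ASEC characterisation (\cref{lem:asecsuff}) the run of $\pi$, whenever accepted, almost surely reaches a product state lying in some ASEC of $\prodmdp$ w.r.t.\ $E$; let $\pi'$ agree with $\pi$ until then, and afterwards track $\hat E$ exactly as $\rma$ does and, in every state covered w.r.t.\ the current $\hat E$, play the action $\act^{\hat E}_{(s,q)}(s,q)$ that $\rma$ expects---so $\pi'$ never drives $\rma$ into $\bot$. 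One argues that this leads the run to commit to a genuine (closed under $P$) accepting simple EC after finitely many resets, whence $\rma$ awards reward $1$ for all but finitely many steps; so $\rlavg(\pi')\ge\racc(\pi)$. Combining the two parts as in \cref{sec:rmc}: any $\rlavg$-optimal $\pi^*$ has $\racc(\pi^*)\ge\rlavg(\pi^*)\ge\rlavg(\pi')=\sup_\pi\racc(\pi)$, so $\pi^*$ maximises acceptance, proving \cref{thm:main}.

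The step I expect to be the main obstacle is exactly the correctness bookkeeping that ties the \emph{learned} support $\hat E_\infty$ to the \emph{true} support $E$ in the limit. The crux is the two-sided claim that (i) almost surely any ASEC the machine eventually commits to w.r.t.\ $\hat E_\infty$ is in fact closed under $P$---a positive-probability escape would, with probability one, eventually be observed, enlarge $\hat E$ and reset the flag, contradicting stabilization---and (ii) a genuine ASEC reached by the witness policy is, after a finite exploration phase, recognised by the machine, with the minimal-index tie-breaking for $\{\mathcal C_i^{\hat E}\}$ interacting correctly with the resets. Ensuring that this necessarily finite ``learning phase'' never perturbs the limit-average value, and that $\pi'$ can be kept in lock-step with $\rma$'s evolving beliefs, is the delicate part; the remainder is a routine adaptation of \cref{sec:grey}.
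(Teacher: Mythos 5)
Your construction is essentially the paper's (states carrying the DRA state, a $\top/\bot$ flag and the set $\hat E$ of transitions seen so far; flag restored whenever $\hat E$ grows; reward $1$ in covered states; $\bot$ on deviation), and your Part~1 together with your ``crux~(i)'' is exactly the paper's argument: almost surely the believed ASEC that a reward-$1$ run commits to is closed under the true $P$, hence a genuine ASEC, hence the run is accepted. The gap is in Part~2. You define the witness policy $\policy'$ to play, in states covered w.r.t.\ the \emph{current belief} $\hat E$, the action $\act^{\hat E}_{(s,q)}(s,q)$ that the machine expects, and claim this ``commits to a genuine ASEC after finitely many resets.'' That claim fails: a believed ASEC is closed only under the believed support $\hat E\subseteq E$, so its prescribed action may, under the true dynamics, leave the believed component with positive probability and land in a state outside every accepting end component. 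Concretely, let the true product MDP have $P(v,a,v)=1$ and $P(v,b,v)=P(v,b,w)=\tfrac12$ with $v$ accepting and $w$ a rejecting sink, and let $\policy$ play $b$ once at $v$ and then $a$ forever, so $\racc(\policy)=\tfrac12$. Your $\policy'$ follows $\policy$ at the (uncovered) first visit to $v$ and plays $b$; if $(v,b,v)$ is recorded, the only believed ASEC at $v$ prescribes $b$, your $\policy'$ locks onto $b$, and with probability $1$ the run eventually falls into $w$. No $\bot$ is ever entered (each escape discovers a new transition and restores the flag), but avoiding $\bot$ is not enough: reward $1$ also requires being in a covered state, and after the escape the run is trapped in uncovered rejecting territory. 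Hence $\rlavg(\policy')=0<\racc(\policy)$.

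The paper's $\policy'$ avoids this by using the \emph{true, unknown} support (only existence of $\policy'$ is needed): upon first reaching a state of a genuine ASEC $\mathcal C=(T,\act)$ it plays $\act$ forever. Since that moment is the first visit to any state of $T$, every transition the machine ever records out of a state of $T$ carries the action $\act$ prescribes; consequently no believed ASEC with a different action can ever overlap $T$, the machine's minimal-index covering ASEC on $T$ eventually coincides with $\act$, the flag never drops to $\bot$ there, and reward $1$ is earned for all but finitely many steps, giving $\racc(\policy)\le\rlavg(\policy')$. If you wish to retain your ``follow the machine's belief'' policy, you would additionally have to prove that escapes from believed ASECs occur only finitely often and almost surely return the run to a state of a genuine ASEC, which is not true in general; the cleanest repair is to adopt the paper's definition of $\policy'$.
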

Again, we fix an MDP without transition function $(S,A,s_0)$, a label function $\lfun:S\times A\times S\to 2^{\alphab}$ and a DRA $\dra=(Q,2^\mathcal{AP},q_0,\delta,F)$.
Note that the ASECs of a product MDP are uniquely determined by the non-zero probability transitions. Thus, for each set of transitions  $E\subseteq (S\times Q)\times A\times (S\times Q)$,  
we let $\mathcal C^E_1=(T_1,\act_1),\ldots,\mathcal C^E_n=(T_n,\act_n)$ denote a collection of ASECs covering all states in ASECs w.r.t.\ the MDPs in which $E$ is the set of non-zero probability transitions.\footnote{To achieve the same number $n$ of ASECs we can add duplicates. If there are no ASECs we can set $T_i\defeq\emptyset$.} 
Then, for each set $E$ and state $(s,q)\in T^E_1\cup\cdots\cup T^E_n$, we let $\mathcal C_{(s,q)}^E=(T_{(s,q)}^E,\act_{(s,q)}^E)$ be the ASEC $\mathcal C_i^E$ that contains $(s,q)$ in which the index $i$ is minimal.

Our reward machine $\rma=\rma_{(S,A,\lfun,\dra)}$ extends the ideas from the preceding section. 
Importantly, we keep track of the set of transitions $E$ taken so far and assign rewards according to our current knowledge about the graph of the product MDP. 
Therefore, we propose employing states of the form $(q, \flag, E)$, where $q\in Q$ keeps track of the state of the DRA, $\flag\in\{\top,\bot\}$ is a \emph{status flag} and $E\subseteq (S\times Q)\times A\times (S\times Q)$ memorises the transitions in the product MDP encountered thus far.

Intuitively, we set the flag to $\bot$ if we are in MDP state $s$, $(s,q)$ is in one of the $\mathcal C_1^E,\ldots,\mathcal C_n^E$ and the chosen action deviates from $\act_{(s,q)}^E(s,q)$. We can recover from $\bot$ by discovering new transitions.
Besides, we give reward $1$ if $\flag=\top$ and $(s,q)$ is in one of the $\mathcal C_1^E,\ldots,\mathcal C_n^E$ (and $0$ otherwise).

 The status flag is required since discovering new transitions will change the structure of (accepting simple) end components. Hence, differently from the preceding section, it is not sufficient to have a single sink state.

The initial state of our reward machine is $u_0\defeq (q_0,\top,\emptyset)$ and we formally define the update and reward functions as follows:
\begin{align*}
	\delta_u((q,\flag,E),(s,a,s'))&\defeq
	\begin{cases}
		(q',\bot,E) &\text{if }\flag=\bot\text{ and } e\in E  \\
        (q',\bot,E)&\text{if } \flag=\top,e\in E, (s,q)\in T_1^E\cup\cdots\cup T_n^E\text{ and }\\
        & a\not\in \act_{(s,q)}^E(s,q)\\
        (q',\top, E\cup \{e\}) &\text{otherwise}
	\end{cases}\\
	\delta_r((q,\flag,E),(s,a,s'))&\defeq
	\begin{cases}
		1&\text{if }\flag=\top,(s,q)\in T_1^E\cup\cdots\cup T_n^E\\
		0&\text{otherwise}
	\end{cases}
\end{align*}
where $q'\defeq\delta(q,\lfun(s,a,s'))$ and $e\defeq ((q,s),a,(q',s'))$.

	\begin{example}
		For our running example (see \cref{ex:running,fig:running}) initially no transitions are known (hence no ASECs). Therefore, all transitions receive reward $0$. Once action $a$ has been performed in state $s_0$ in the MDP $\mdp$ and $(q_1,f,E)$ in the reward machine $\rma$, we have discovered the ASEC $(\{(s_0,q_1)\},(s_0,q_1)\mapsto \{a\})$ and a reward of $1$ is given henceforth unless action $b$ is selected eventually. In that case, we leave the ASEC and we will not discover further ASECs since there is only one. From here, it is not possible to return to state $q_1$ in the DRA and henceforth only reward $0$ will be obtained.
	\end{example}

\cref{thm:main} is proven by demonstrating an extension of \cref{lem:corgrey} (see \cref{app:general}):
\begin{restatable}{lemma}{corblack}
	\label{lem:corblack} 
	Suppose $\mdp=(S,A,s_0,P)$ is an arbitrary MDP.
	\begin{enumerate}[noitemsep,topsep=0pt]
\item For every policy $\policy$, $\rlavg(\pi)\leq\racc(\pi)$.
\item For every policy $\policy$, there exists some policy $\policy'$ satisfying
$\racc(\policy)\leq\rlavg(\policy')$.
	\end{enumerate}
\end{restatable}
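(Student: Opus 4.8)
The plan is to work entirely in the product MDP $\prodmdp$ and to mirror the two-part structure of \cref{lem:corgrey}, accommodating the two features of the reward machine of \cref{sec:general} that are absent in \cref{sec:grey}: the component $E$ which accumulates discovered transitions, and the status flag $\flag\in\{\top,\bot\}$. By \eqref{eq:prodmdp} it suffices to compare $\rlavg(\pi)$ with $\prob_{\run\sim\dist_\pi^\mdp}[\run^\otimes\text{ is accepted by }\prodmdp]$. I would start with a pathwise observation: along every run the $E$-component of the reward-machine state is non-decreasing and lies in a finite set, so it stabilises to some (run-dependent) $E^\infty$; once $E$ has stabilised every subsequent transition already lies in $E$, so the first clause of $\delta_u$ applies and $\bot$ is absorbing thereafter. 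Consequently the per-step reward is eventually constant --- eventually $1$ or eventually $0$ --- so the Ces\`aro averages $\frac1t\sum_{i<t}r_i$ converge almost surely, and by bounded convergence $\rlavg(\pi)$ equals the probability that this eventual value is $1$. Both inequalities of the lemma thus reduce to controlling that probability.

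For Part 1, take a run whose per-step reward is eventually $1$; then reward $1$ occurs infinitely often. After $E$ has stabilised at $E^\infty$, with $\bot$ absorbing, the flag must be $\top$ from some step on, and the product state lies in $T_1^{E^\infty}\cup\cdots\cup T_n^{E^\infty}$ infinitely often. From the first such step the flag can remain $\top$ only if, whenever the product state is some $(s,q)$ in this set, the action taken is the unique action $\act_{(s,q)}^{E^\infty}(s,q)$; since $\mathcal C_{(s,q)}^{E^\infty}$ is an end component whose defining transitions all lie in the now-stable $E^\infty$, the successor again lies in $T_1^{E^\infty}\cup\cdots\cup T_n^{E^\infty}$, so the run never leaves it. The index of the minimal $\mathcal C_i^{E^\infty}$ containing the current state is then non-increasing, hence eventually constant, say $i^*$; as $\mathcal C_{i^*}^{E^\infty}$ is strongly connected and (using stability of $E$) every transition the run takes lies in $E^\infty$, the run would reach every state of $T_{i^*}^{E^\infty}$ with positive probability, which forces $T_{i^*}^{E^\infty}$ to contain no state of smaller minimal index, so the run in fact visits all of $T_{i^*}^{E^\infty}$ infinitely often and $\infv(\run^\otimes)=T_{i^*}^{E^\infty}$. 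Since $\mathcal C_{i^*}^{E^\infty}$ is accepting, $\run^\otimes$ is accepted. Taking expectations gives $\rlavg(\pi)\le\racc(\pi)$, exactly as \cref{lem:corgrey}(1) does in the warm-up.

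For Part 2, I would generalise the warm-up construction: let $\pi'$ agree with $\pi$ up to a stopping time $\tau$ --- roughly, the first time the product state lies in an accepting simple end component recognised by the reward machine (i.e.\ in some $T_i^{E}$ for the current $E$) --- and thereafter follow a memoryless strategy, obtained from the (known) transition function $P$, that obeys the machine's action prescription $\act_{(s,q)}^{E}(s,q)$ and keeps the run among recognised end components, re-evaluating whenever a newly discovered transition enlarges $E$ (only finitely many enlargements are possible, so $E$ stabilises). One then argues: (a) on $\{\tau<\infty\}$ the run of $\pi'$ almost surely has per-step reward eventually $1$ --- once $E$ has stabilised the flag recovers to $\top$ at the last enlargement (``otherwise'' clause of $\delta_u$), stays $\top$ because $\pi'$ plays exactly what the machine prescribes, and the analysis of Part 1 shows the run commits to a single accepting simple end component with reward eventually $1$; and (b) almost surely $\{\run^\otimes\text{ accepted under }\pi\}\subseteq\{\tau<\infty\}$ --- on the accepting event $\pi$'s run a.s.\ eventually stays in some AEC $(T,\act)$ of $\prodmdp$ and visits every state of $T$ infinitely often, so eventually $E$ contains all transitions of $(T,\act)$; by \cref{lem:asec} applied with support $E$, $(T,\act)$ then contains a simple EC, which is accepting and whose transitions lie in $E$, hence is recognised, and its states in $T$ are visited, so $\tau<\infty$ (the generalisation of \cref{lem:asecsuff}). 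Since $\pi$ and $\pi'$ agree before $\tau$, $\prob_{\pi'}[\tau<\infty]=\prob_{\pi}[\tau<\infty]\ge\racc(\pi)$, whence $\rlavg(\pi')\ge\racc(\pi)$. Given the lemma, \cref{thm:main} follows just as \cref{thm:grey} follows from \cref{lem:corgrey}.

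The main obstacle I anticipate is the ``after $E$ stabilises'' analysis shared by both parts: one must show the run genuinely becomes trapped in, and then exhausts, a single accepting simple end component despite (i) a simple end component with respect to an incomplete $E$ possibly failing to be an end component of the true product MDP, (ii) the interaction between nested recognised end components and the minimal-index selection rule (this is precisely where the ``reachability forces the minimal index to drop'' argument in Part 1 is needed), and (iii) the status flag --- and, for Part 2, despite $\pi'$ not knowing the true support, which is what forces $\pi'$ to be defined via $P$ and to keep discovering transitions until $E$ stabilises. The measure-theoretic bookkeeping --- conditioning on the eventual knowledge set $E^\infty$ and on the finite prefix up to stabilisation, after which the residual behaviour (of the run in Part 1, and of $\pi'$ in Part 2) is a genuine finite-state Markov chain to which irreducibility applies --- is where most of the work lies; monotonicity of $E$, absorption of $\bot$, eventual constancy of the reward, and the minimal-index argument extend \cref{sec:grey} routinely.
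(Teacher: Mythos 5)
Part 1 of your proposal is essentially the paper's argument (a run with limit-average reward $1$ must commit to some $\mathcal C_i^{E_\run}$, and almost surely all of that component's true transitions are eventually discovered, so it is a genuine ASEC and the run is accepted); your extra detail on the minimal-index bookkeeping is fine.

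Part 2, however, has a genuine gap in the definition of $\policy'$. You switch at the first time $\tau$ at which the product state lies in some $T_i^{E}$ \emph{for the current knowledge} $E$, and thereafter obey the machine's prescription $\act^{E}_{(s,q)}(s,q)$. But a set of states can be a recognised ASEC with respect to a partial $E$ without being an end component of the true product MDP: if $(v,a,v)$ has been observed but the true transition function also sends $(v,a,\cdot)$ to a rejecting trap with positive probability, then $\{v\}$ with action $a$ is recognised as an ASEC the moment the self-loop is traversed, $\tau$ fires, and your $\policy'$ plays $a$ forever --- almost surely falling into the trap and earning limit-average reward $0$, even though the original $\policy$ (which, say, plays $a$ once and then a genuinely looping action $b$) has $\racc(\policy)>0$. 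So your claim (a), that on $\{\tau<\infty\}$ the reward is eventually $1$ almost surely, is false for this $\tau$; ``re-evaluating when $E$ grows'' cannot help once the run has left the region from which accepting ECs are reachable. The paper avoids this by defining the switch relative to the \emph{true} support $E^*$: $\policy'$ follows $\policy$ until first entering a true ASEC $\mathcal C=(T,\act)$ and then plays $\act(s_t,q_t)$, which is legitimate because only existence of $\policy'$ is claimed. The real work is then to show this never sends the flag to $\bot$: since no state of $T$ is visited before the switch, every transition recorded from a state of $T$ uses only the action prescribed by $\act$, so no recognised ASEC overlapping $\mathcal C$ can ever prescribe a different action there; moreover at least one new transition of $\mathcal C$ is discovered after the switch, which resets the flag to $\top$, and once all of $\mathcal C$ is discovered it is recognised and reward $1$ is paid forever. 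Your claim (b) and the final chain of inequalities are fine once $\tau$ is redefined this way.
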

Note that \cref{lem:corblack} immediately proves that the reduction is not only optimality preserving (\cref{thm:main}) but also robust: every $\epsilon$-approximately limit-average optimal policy is also $\epsilon$-approximately optimal w.r.t.\ $\racc$.
This observation is important because \emph{exactly} optimal policies for the limit average problem may be hard to find.

Intuitively, to see part 1 of \cref{lem:corblack} we note: If an average reward of $1$ is obtained for a run, the reward machine believes, based on the partial observation of the product MDP, that the run ends up in an ASEC. Almost surely, we eventually discover all possible transitions involving the same state-action pairs as this ASEC and therefore this must also be an ASEC w.r.t.\ the true, unknown product MDP.
For part 2, we modify the policy $\policy$ similarly as in \cref{lem:corgrey} by selecting actions $\act(s_t,q_t)$ once having entered an ASEC $\mathcal C=(T,\act)$ w.r.t.\ the true, unknown product MDP.\footnote{NB The modified policy depends on the true, unknown support of the Probability transition function; we only claim the \emph{existence} of such a policy.}

	\section{Convergence for Limit Average, $\omega$-Regular and LTL Objectives}
	\label{sec:lim-avg}
	
Thanks to the described translation, advances (in both theory and practice) in the study of RL with average rewards carry over to RL with $\omega$-regular and LTL objectives. In this section, we show that it is possible to learn optimal policies for limit average rewards in the limit. Hence, we resolve an open problem \cite{Alur:2022}: also RL with $\omega$-regular and LTL objectives can also be learned in the limit.

We start with the case of simple reward functions $\rma:S\times A\times S\to\Real$.
Recently, \cite{GP23} have shown that discount optimal policies for sufficiently high discount factor $\overline\gamma\in[0,1)$ are also limit average optimal. This is not enough to demonstrate \cref{thm:avgconv} since $\overline\gamma$ is generally not known and in finite time we might only obtain \emph{approximately} limit average optimal policies.

Our approach is to reduce RL with average rewards to a \emph{sequence} of discount sum \changed[dw]{problems} with increasingly high discount factor, which are solved with increasingly high accuracy. 
Our crucial insight is that eventually the approximately optimal solutions to the discounted problems will also be limit average optimal (see \cref{app:lim-avg} for a proof):
\begin{lemma}
    \label{lem:finconv}
    Suppose $\gamma_k\nearrow 1$, $\epsilon_k\searrow 0$ and suppose each $\policy_k$ is a memoryless policy.
    Then there exists $k_0$ such that for all $K\ni k\geq k_0$, $\policy_k$ is limit average optimal, where $K$ is the set of $k\in\nat$ satisfying $\rdisc{\gamma_k}(\policy_k)\geq \rdisc{\gamma_k}(\policy)-\epsilon_k$ for all memoryless policies.
\end{lemma}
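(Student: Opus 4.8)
The plan is to reduce the statement about \emph{exact} limit-average optimality to one about the \emph{gains} of a finite family of policies, exploiting two Abelian-type limits linking discounted and average values. Write $g_\policy\defeq\rlavg(\policy)$ for the gain of a memoryless policy $\policy$ (in a finite MDP the Cesàro averages converge for a fixed memoryless policy, so the $\liminf$ defining $\rlavg$ is an honest limit), and let $g^\star\defeq\sup_\policy g_\policy$, which is attained by a deterministic memoryless policy. Set $v^\star(\gamma)\defeq\sup_\policy\rdisc{\gamma}(\policy)$; since a $\gamma$-discount optimal policy may be taken deterministic and memoryless, the supremum over memoryless policies equals $v^\star(\gamma)$, so the hypothesis $k\in K$ gives exactly $\rdisc{\gamma_k}(\policy_k)\ge v^\star(\gamma_k)-\epsilon_k$, while $\rdisc{\gamma_k}(\policy)\le v^\star(\gamma_k)$ for every memoryless $\policy$.

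The two facts I would invoke are: (i) for each fixed memoryless $\policy$, $(1-\gamma)\,\rdisc{\gamma}(\policy)\to g_\policy$ as $\gamma\uparrow1$ — the elementary Abelian theorem applied to the bounded sequence $i\mapsto\ex_{\run\sim\dist_\policy^{\mdp}}[r_i]$, whose Cesàro limit is $g_\policy$; and (ii) $(1-\gamma)\,v^\star(\gamma)\to g^\star$ as $\gamma\uparrow1$. Fact (ii) is where the Blackwell discount factor enters: by the result of \cite{GP23} (or classical Blackwell-optimality theory) there is $\overline\gamma<1$ such that the set of $\gamma$-discount optimal deterministic memoryless policies is constant on $[\overline\gamma,1)$ and every such policy is gain-optimal; hence for $\gamma\ge\overline\gamma$, $v^\star(\gamma)=\rdisc{\gamma}(\sigma)$ for a fixed gain-optimal $\sigma$, and (ii) follows from (i) applied to $\sigma$. (Equivalently, (ii) is the Laurent expansion $v^\star(\gamma)=\tfrac{g^\star}{1-\gamma}+O(1)$ near $\gamma=1$.)

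With these in hand I would argue by contradiction. Suppose the conclusion fails; then $K'\defeq\{k\in K:\policy_k\text{ is not limit-average optimal}\}$ is infinite. Since each $\policy_k$ ranges over the finite set of deterministic memoryless policies, some fixed $\policy^\dagger$ with $g_{\policy^\dagger}<g^\star$ coincides with $\policy_k$ for all $k$ in an infinite subset $K''\subseteq K'$. For $k\in K''$,
\[
(1-\gamma_k)\,\rdisc{\gamma_k}(\policy^\dagger)=(1-\gamma_k)\,\rdisc{\gamma_k}(\policy_k)\ \ge\ (1-\gamma_k)\,v^\star(\gamma_k)-(1-\gamma_k)\,\epsilon_k .
\]
Letting $k\to\infty$ along $K''$, the left side tends to $g_{\policy^\dagger}$ by (i), and the right side tends to $g^\star-0=g^\star$ by (ii) together with $(1-\gamma_k)\epsilon_k\to0$. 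Hence $g_{\policy^\dagger}\ge g^\star$, contradicting $g_{\policy^\dagger}<g^\star$. So $K'$ is finite, and any $k_0>\max K'$ (or $k_0=0$ if $K'=\emptyset$) works.

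I expect fact (ii) to be the main obstacle: showing that the \emph{rescaled optimal discounted value} converges to the optimal gain is precisely where the unknown Blackwell factor $\overline\gamma$ must be used, handled either through the cited result of \cite{GP23} or through the classical Laurent-series analysis of $\gamma\mapsto v^\star(\gamma)$; the other ingredients (the Abelian theorem (i), finiteness of the deterministic memoryless policy class, and $v^\star(\gamma)$ dominating all memoryless discounted values) are routine. One subtlety worth flagging is that the argument genuinely relies on $\policy_k$ ranging over a \emph{finite} set (i.e.\ being deterministic): for stochastic memoryless policies one can choose $\policy_k$ with $g_{\policy_k}\uparrow g^\star$ but $g_{\policy_k}<g^\star$ for every $k$ while still meeting the $\epsilon_k$-optimality constraint, so this finiteness is what upgrades ``asymptotically optimal'' to ``eventually exactly optimal''.
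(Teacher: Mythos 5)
Your proof is correct and rests on exactly the same ingredients as the paper's argument: the Abelian limit $(1-\gamma)\,\rdisc{\gamma}(\policy)\to\rlavg(\policy)$, the existence of a memoryless Blackwell optimal (hence limit-average optimal) policy from \cite{B62,GP23}, and the finiteness of the deterministic memoryless policy class---you merely package them as a contradiction-plus-pigeonhole argument on a subsequence, where the paper runs a direct estimate against the minimal gain gap $\Delta$ between optimal and suboptimal policies. The caveat you flag about stochastic memoryless policies is a genuine one for the statement as written, but it coincides with the assumption the paper itself makes implicitly when its proof treats the set of memoryless policies as finite.
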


Our proof harnesses yet another notion of optimality: a policy $\pi$ is \emph{Blackwell optimal} (cf.~\cite{B62} and \cite[Sec.~8.1]{HY02}) if there exists $\overline\gamma\in (0,1)$ such that $\policy$ is $\gamma$-discount optimal for all $\overline\gamma\leq\gamma<1$.
It is well-known that memoryless Blackwell optimal strategies always exist \cite{B62,GP23} and they are also limit-average optimal \cite{HY02,GP23}.

Thanks to the PAC (probably approximately correct) learnability of RL with discounted rewards \cite{Kearns:2002,SLL09}, there exists an algorithm \texttt{Discounted} which receives as inputs a simulator for $\mdp$, $\rma$ as well as $\gamma,\epsilon$ and $\delta$, and  with probability $1-\delta$ returns an $\epsilon$-optimal memoryless policy for discount factor $\gamma$.
In view of \cref{lem:finconv}, our approach is to run the PAC algorithm for discount-sum RL for increasingly large discount factors $\gamma$ and increasingly low $\delta$ and $\epsilon$ (\cref{alg:avg}).

\begin{algorithm}
	\caption{RL for limit average rewards}\label{alg:avg}
	\begin{algorithmic}
		\Require simulator for $\mdp,  \rma$
		\For{$k\in\nat$}
		\State $\pi_k \gets \texttt{Discounted}(\mdp, \rma, \underbrace{1-1/k}_{\gamma_k}, \underbrace{1/k}_{\epsilon_k}, \underbrace{1/k^2}_{\delta_k})$
        \vspace*{-4mm}
		\EndFor
	\end{algorithmic}
\end{algorithm}

\begin{theorem}
    \label{thm:avgconv}
    RL with average reward functions can be learned in the limit by \cref{alg:avg}: almost surely there exists $k_0\in\nat$ such that $\policy_k$ is limit-average optimal for $k\geq k_0$.
\end{theorem}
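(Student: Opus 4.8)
The plan is to combine \cref{lem:finconv} with a Borel--Cantelli argument to transfer the per-iteration PAC guarantees of \texttt{Discounted} into an almost-sure eventual-optimality statement. First I would fix the notation: for each $k$, \cref{alg:avg} calls $\policy_k\gets\texttt{Discounted}(\mdp,\rma,\gamma_k,\epsilon_k,\delta_k)$ with $\gamma_k=1-1/k$, $\epsilon_k=1/k$ and $\delta_k=1/k^2$. Let $B_k$ be the ``bad'' event that $\policy_k$ fails to be $\epsilon_k$-optimal for discount factor $\gamma_k$, i.e.\ $B_k=\{\rdisc{\gamma_k}(\policy_k)<\rdisc{\gamma_k}(\policy)-\epsilon_k\text{ for some memoryless }\policy\}$. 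By the PAC guarantee of \texttt{Discounted} we have $\prob[B_k]\leq\delta_k=1/k^2$.

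Next, since $\sum_{k\in\nat}\prob[B_k]\leq\sum_k 1/k^2<\infty$, the first Borel--Cantelli lemma yields $\prob[\limsup_k B_k]=0$; that is, almost surely only finitely many $B_k$ occur. Equivalently, almost surely there exists $k_1\in\nat$ such that for all $k\geq k_1$ the event $B_k$ does not occur, meaning $\policy_k$ satisfies $\rdisc{\gamma_k}(\policy_k)\geq\rdisc{\gamma_k}(\policy)-\epsilon_k$ for all memoryless $\policy$. In the language of \cref{lem:finconv}, this says that almost surely all sufficiently large $k$ lie in the set $K$ defined there.

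Now I would apply \cref{lem:finconv} directly: the sequences satisfy $\gamma_k\nearrow 1$ and $\epsilon_k\searrow 0$, each $\policy_k$ is memoryless (being an output of \texttt{Discounted}), so there is a deterministic $k_0$ such that every $k\in K$ with $k\geq k_0$ gives a limit-average optimal $\policy_k$. Combining this with the almost-sure conclusion of the previous paragraph, set $k_0'\defeq\max\{k_0,k_1\}$; then almost surely, for all $k\geq k_0'$ we have both $k\in K$ and $k\geq k_0$, hence $\policy_k$ is limit-average optimal. This is exactly the claim of \cref{thm:avgconv}, with $k_0'$ playing the role of the $k_0$ in the statement (note $k_0'$ is a random variable, which is compatible with the ``almost surely there exists $k_0$'' phrasing).

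The only genuinely delicate point is making sure the quantifier structure lines up: \cref{lem:finconv} provides a \emph{fixed} threshold $k_0$ valid on the subsequence $K$, whereas the Borel--Cantelli step provides a \emph{random} threshold $k_1$ beyond which the realized run stays inside $K$; the proof just needs to take the maximum and observe measurability is not an issue since we only assert existence. A secondary thing to be careful about is that \cref{lem:finconv} is stated for comparison against \emph{memoryless} policies only, but since (as used implicitly via Blackwell optimality in the surrounding discussion) limit-average optimality among memoryless policies coincides with global limit-average optimality in finite MDPs, ``limit average optimal'' in the conclusion is unambiguous; I would add one sentence recalling this. I do not expect any computational obstacle here — the content is entirely in \cref{lem:finconv}, and \cref{thm:avgconv} is essentially its probabilistic packaging.
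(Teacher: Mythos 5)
Your proposal is correct and follows essentially the same route as the paper: the paper bounds $\ex[\#(\nat\setminus K)]\leq\sum_k\delta_k<\infty$ to conclude that almost surely only finitely many iterations fall outside $K$, which is exactly your Borel--Cantelli step in equivalent form, and then invokes \cref{lem:finconv} just as you do. Your additional remarks on combining the random threshold with the deterministic $k_0$ of \cref{lem:finconv} and on memoryless versus general policies are sound clarifications of details the paper leaves implicit.
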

\begin{proof}
    Using the definition for $K$ of \cref{lem:finconv} of iterations where the PAC-MDP algorithm succeeds,
    \begin{align*}
        \ex\left[\# (\nat\setminus K) \right]&\leq \sum_{k\in\nat}\prob[\text{PAC-MDP fails in iteration $k$}]\leq \sum_{k\in\nat} \delta_k=\sum_{k\in\nat} \frac 1{k^2} < \infty
    \end{align*}
    The claim follows immediately with  \cref{lem:finconv}.
    \end{proof}
Next, we turn to the more general case of reward \emph{machines}.
\cite{Icarte:PhD,Icarte:18a} observe that optimal policies for reward machines can be learned by learning optimal policies for the modified MDP which additionally tracks the state the reward machine is in and assigns rewards accordingly. We conclude at once: 
\begin{corollary}
    \label{cor:lim}
    RL with average reward machines can be learned in the limit.
\end{corollary}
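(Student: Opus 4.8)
The plan is to reduce RL with average reward \emph{machines} to RL with average reward \emph{functions} on a suitably enlarged MDP, and then invoke \cref{thm:avgconv} verbatim. First I would recall the standard product construction of \cite{Icarte:PhD,Icarte:18a}: given an MDP $\mdp=(S,A,s_0,P)$ and a reward machine $\rma=(U,u_0,\delta_u,\delta_r)$, form the product MDP $\mdp\times\rma\defeq(S\times U,A,(s_0,u_0),\widetilde P)$ where $\widetilde P((s,u),a,(s',u'))\defeq P(s,a,s')$ if $u'=\delta_u(u,(s,a,s'))$ and $0$ otherwise, together with the \emph{memoryless} reward function $\widetilde\rma((s,u),a,(s',u'))\defeq\delta_r(u,(s,a,s'))$. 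Since $\delta_u$ is deterministic, the second component of a run in $\mdp\times\rma$ is a deterministic function of the first, so runs of $\mdp\times\rma$ under a policy $\widetilde\policy$ are in measure-preserving bijection with runs of $\mdp$ under the induced policy on $\mdp$, and the sequence of rewards $\widetilde\rma((s_i,u_i),a_i,(s_{i+1},u_{i+1}))=\delta_r(u_i,(s_i,a_i,s_{i+1}))$ is exactly the reward sequence defining $\rlavg$ for $\rma$. Hence the limit-average value of a policy on $\mdp$ w.r.t.\ the reward machine $\rma$ equals the limit-average value of the corresponding policy on $\mdp\times\rma$ w.r.t.\ the reward function $\widetilde\rma$, and optimal policies correspond.

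Next I would observe that a simulator for $\mdp$ together with the (known) finite reward machine $\rma$ yields a simulator for $(\mdp\times\rma,\widetilde\rma)$: one simply maintains the reward-machine state $u$ alongside, updating it via $\delta_u$ and emitting reward $\delta_r(u,\cdot)$ on each simulated transition. Therefore \cref{alg:avg}, run on this simulated MDP with reward function $\widetilde\rma$, is applicable, and \cref{thm:avgconv} guarantees that almost surely there is $k_0$ such that $\policy_k$ is limit-average optimal on $\mdp\times\rma$ for all $k\geq k_0$. Pulling this back through the bijection above, the induced policies on $\mdp$ (which are finite-memory, with memory exactly the reward-machine state) are limit-average optimal w.r.t.\ $\rma$ for $k\geq k_0$. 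This establishes \cref{cor:lim}.

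The only subtlety I anticipate — and the step I would be most careful about — is the interaction between \emph{memorylessness} and the product construction: \cref{lem:finconv} and \cref{thm:avgconv} are stated for memoryless policies, and \texttt{Discounted} returns memoryless policies on its input MDP. This is fine because memoryless policies on $\mdp\times\rma$ are precisely the finite-memory policies on $\mdp$ whose memory is the reward-machine state, and the existence of a memoryless Blackwell-optimal policy — the fact powering \cref{lem:finconv} — holds for the finite MDP $\mdp\times\rma$ just as for any finite MDP. One should also note that $\widetilde\rma$ is bounded (its range is the finite image of $\delta_r$), so the PAC guarantees for discounted RL apply unchanged. Everything else is the routine bookkeeping of the product construction, so I would keep the write-up to the two observations above plus a one-line appeal to \cref{thm:avgconv}; indeed, given how standard the reward-machine reduction is, a proof this short is appropriate for a corollary.
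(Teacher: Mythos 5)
Your proposal is correct and follows essentially the same route as the paper: the paper proves \cref{cor:lim} exactly by the standard product construction of \cite{Icarte:PhD,Icarte:18a} (tracking the reward-machine state inside an enlarged MDP with a memoryless reward function) and then invoking \cref{thm:avgconv}. Your additional remarks on the simulator composition, memorylessness on the product, and boundedness of the rewards are sound but are the same routine bookkeeping the paper leaves implicit.
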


Finally, harnessing \cref{thm:main} we resolve Open Problem 4 of \cite{Alur:2022}:
\begin{theorem}
    \label{thm:ltlconv}
    RL with $\omega$-regular and LTL objectives can be learned in the limit.
\end{theorem}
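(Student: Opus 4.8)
The plan is to obtain this statement essentially for free by composing the two pillars already in place: the optimality-preserving translation of \cref{thm:main} and the in-the-limit learnability of limit-average reward machines from \cref{cor:lim}. No fresh analysis of the learning dynamics is required.

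First I would dispose of the LTL case by reducing it to the $\omega$-regular case. Given an LTL formula $\varphi$ over the atomic propositions $\alphab$, the classical pipeline LTL $\to$ nondeterministic B\"uchi automaton $\to$ deterministic Rabin automaton (see e.g.\ \cite[Ch.~5]{Baier:2008}) computably produces a DRA $\dra_\varphi$ whose language is exactly the set of label sequences satisfying $\varphi$; hence $\vfun{\mdp}{\dra_\varphi}(\policy)$ equals the probability that $\policy$ satisfies $\varphi$, for every MDP $\mdp$ and policy $\policy$. So it suffices to handle a specification presented as a DRA $\dra$ together with a label function $\lfun:S\times A\times S\to 2^\alphab$.

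Then I would run the following algorithm. Since the learner knows $S$, $A$, $\lfun$ and $\dra$ (only the transition probabilities $P$ are hidden) and has a simulator for $\mdp$, it first computes the limit-average reward machine $\rma\defeq\rma_{(S,A,\lfun,\dra)}$ delivered by \cref{thm:main}; this is legitimate precisely because, being an optimality-preserving specification translation, the map $(S,A,\lfun,\dra)\mapsto\rma$ is computable and does not depend on $P$. Next it builds a simulator for $\mdp$ equipped with $\rma$ --- equivalently, for the product of $\mdp$ with $\rma$, whose extra coordinate deterministically tracks the reward-machine state --- which only makes calls to the given simulator of $\mdp$. Feeding this to the procedure of \cref{cor:lim} (i.e.\ \cref{alg:avg} run on the reward-machine-extended MDP) yields policies $\policy_k$ for which, almost surely, there is $k_0\in\nat$ with $\policy_k$ limit-average optimal w.r.t.\ $\rma$ for all $k\geq k_0$. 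Reading each such $\policy_k$ back as a finite-memory policy of $\mdp$, it lies in $\Pi(S,A)$, so by optimality preservation (\cref{thm:main}) maximising $\rlavg$ forces $\policy_k$ to maximise $\racc$, i.e.\ the acceptance probability of $\dra$ --- equivalently the probability of satisfying the $\omega$-regular (resp.\ LTL) specification. This is exactly the assertion.

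I do not anticipate a genuine obstacle; the only point deserving care is checking that the composition never smuggles in knowledge of $P$ (the translation is $P$-independent by definition of an optimality-preserving specification translation, and the reward-machine-extended simulator is merely a wrapper around the MDP simulator) and that the class of policies returned --- finite-memory policies --- is admissible for $\racc$, which it is since $\racc$ is defined on all of $\Pi(S,A)$. If one additionally wants anytime approximate guarantees rather than merely the limit statement, the robustness of the reduction noted after \cref{lem:corblack}, combined with the $\epsilon_k\searrow 0$ schedule of \cref{alg:avg}, supplies $\epsilon$-optimality for $\racc$ along the way; for learnability in the limit the exact statement above already suffices.
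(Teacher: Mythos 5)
Your proposal is correct and follows essentially the same route as the paper: translate the LTL formula to a DRA, apply the optimality-preserving translation of \cref{thm:main} to obtain a limit-average reward machine (computable without knowledge of $P$), and invoke \cref{cor:lim} (i.e.\ \cref{alg:avg} on the reward-machine-extended MDP) so that the asymptotically limit-average-optimal policies are, by optimality preservation, also optimal for $\racc$. The paper treats this composition as immediate, so your additional care about $P$-independence and the admissibility of the returned finite-memory policies only makes the argument more explicit.
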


\paragraph{Discussion.} \cref{alg:avg} makes independent calls to black box algorithms for discount sum rewards. Many such algorithms with PAC guarantees are model based (e.g.\ \cite{Kearns:2002,SLL09}) and sample from the MDP to obtain suitable approximations of the transition probabilities. Thus, \cref{alg:avg} can be improved in practice by re-using approximations obtained in earlier iterations and refining them.

	\section{Related Work and Conclusion}\label{sec:related-work}

The connection between acceptance of $\omega$-regular languages in the product MDP and AECs is well-known in the field of probabilistic model checking~\cite{Baier:2008,Alfaro:99}.  
As an alternative to DRAs \cite{Wolff:12,Ding:14,Sadigh:2014}, Limit Deterministic B\"{u}chi Automata \cite{Sickert:2016} have been employed to express $\omega$-regular languages for RL \cite{Voloshin:22, Bozkurt:2019, Cai:2023, Hosein:2023, Hasanbeig:2020}.

A pioneering work on RL for $\omega$-regular rewards is~\cite{Wolff:12}, which expresses $\omega$-regular objectives using Deterministic Rabin Automata.   
Similar RL approaches for $\omega$-regular objectives can also be found in~\cite{Ding:14, Voloshin:22, Cai:2023, FuT:14}. The authors of~\cite{FuT:14,Perez2024} approach RL for  $\omega$-regular objectives directly by studying the reachability of AECs in the product MDP and developing variants of the R-MAX algorithm~\cite{Brafman:2003} to find optimal policies. 
However, these approaches require prior knowledge of the MDP, such as the structure of the MDP, the optimal $\epsilon$-return mixing time~\cite{FuT:14}, or the $\epsilon$-recurrence time~\cite{Perez2024}.

Various studies have explored reductions of $\omega$-regular objectives to discounted rewards, and subsequently applied Q-learning and its variants for learning optimal policies~\cite{Bozkurt:2019, Sadigh:2014, Hosein:2023, Hasanbeig:2020, Gao:2019}. 
\changed[dw]{In a similar spirit, \cite{V0Y23} present a translation from LTL objectives to \emph{eventual discounted} rewards, where only strictly positive rewards are discounted. }
These translations are generally not optimality preserving unless the discount factor is selected in a suitable way. 
Again, this is impossible without prior knowledge of the exact probability transition functions in the MDP.


Furthermore, whilst there are numerous convergent RL algorithms for average rewards for \emph{unichain} \changed[dw]{or \emph{communitcating}\footnote{These assumptions generally fail for our setting, where in view of \cref{cor:lim}, MDP states also track the states of the reward machine. For instance, in the reward machine in \cref{fig:rm} it is impossible to reach $u_1$ from $u_2$.}} MDPs (e.g.~\cite{Brafman:2003, Yang:2016,Gosavi:2004,Schwartz:1993,Auer:2008,WNS21}), it is unknown whether such an algorithm exists for general multichain MDPs with a guaranteed convergence property. In fact, a negative result in~\cite{Alur:2022} shows that there is no PAC (probably approximately correct) algorithm for LTL objectives and limit-average rewards when the MDP transition probabilities are unknown. 

\cite{Brafman:2003} have proposed an algorithm with PAC guarantees provided $\epsilon$-return mixing times are known. 
They informally argue that for fixed sub-optimality tolerance $\epsilon$, this assumption can be lifted by guessing increasingly large candidates for the $\epsilon$-return mixing time. This yields $\epsilon$-approximately optimal policies in the limit. However, it is not clear how to asymptotically obtain exactly optimal policies as this would require simultaneously decreasing $\epsilon$ and increasing guesses for the $\epsilon$-return mixing time (which depends on $\epsilon$). 


\paragraph{Conclusion.} We have presented an optimality-preserving translation from $\omega$-regular objectives to limit-average rewards furnished by reward machines. 
As a consequence, off-the-shelf RL algorithms for average rewards
can be employed in conjunction with our translation to learn policies for $\omega$-regular objectives. 
Furthermore, we have developed an algorithm asymptotically learning provably optimal policies for limit-average rewards. Hence, also optimal policies for $\omega$-regular and LTL objectives can be learned in the limit.
Our results provide affirmative answers to two open problems in~\cite{Alur:2022}.


\paragraph{Limitations.}
We focus on MDPs with finite state and action sets and assume states are fully observable.
The assumption of \cref{sec:grey} that the support of the MDP's probability transition function is known is eliminated in \cref{sec:general}. 
Whilst the size of our general translation---the first optimality-preserving translation---is exponential, the additional knowledge in \cref{sec:grey} enables a construction of the reward machine of the same size as the DRA expressing the objective. Hence, we conjecture that this size is minimal.
Since RL with average rewards is not PAC learnable, we cannot possibly provide finite-time complexity guarantees of our \cref{alg:avg}.

\ack{
 This research is supported by the National Research Foundation, Singapore, under its RSS Scheme (NRFRSS2022-009).
}

\bibliographystyle{plain}
\bibliography{main.bib}

\appendix



\newpage

\section{Supplementary Materials for \cref{sub:negative-results}}
\label{app:neg}


 Recall that a $\omega$-regular language $L$ is prefix-independent if for every infinite label sequence $w~\in~(2^\mathcal{AP})^\omega$, we have $w \in L$ iff $w' \in L$ for every suffix $w'$ of $w$.  We prove that there is no optimality-preserving translation for reward functions regardless of whether $L$ is prefix-independent or not. The prefix-dependent case was  given in~\cref{sub:negative-results}. Here we focus on the other case:
  \begin{proposition}\label{prop:fun-general}
  There exists a tuple $(S,A,s_0,\lfun)$  and a prefix-independent $\omega$-regular language $L$ for which it is impossible to find a reward function $\mathcal{R}:S \times A \times S \rightarrow\mathbb{R}$ such that for every  probability transition $P$, let $\mdp = (S,A,s_0,P,\lfun)$, then  every $\mathcal{R}^{\textrm{avg}}$-optimal policy of $\mathcal{M}$ is also $L$-optimal (i.e. maximizing the probability of membership in $L$).
  \end{proposition}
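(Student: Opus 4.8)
The plan is to construct a small MDP skeleton together with a prefix-independent $\omega$-regular objective—the natural candidate is a B\"uchi-type liveness property ``infinitely often $p$''—and then argue that a memoryless reward function cannot simultaneously rank the policies correctly for \emph{all} compatible transition probabilities. The key tension to exploit is that a reward function must commit to fixed values on transitions, but which policy is $L$-optimal can depend on the \emph{magnitudes} of the unknown probabilities, whereas the limit-average reward of a policy in a recurrent class depends only on the stationary distribution, which again depends on those magnitudes. By choosing an MDP where two candidate policies' $L$-values and $\rlavg$-values move in \emph{opposite} directions as a probability parameter varies, we force a contradiction.

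Concretely, I would take a skeleton with an initial state $s_0$ offering two actions $a,b$. Action $a$ leads (via a branching transition whose probabilities are unknown) either into a ``good'' recurrent region $G$ where the label $p$ is seen infinitely often, or into a ``bad'' absorbing region $B$ where $p$ never occurs again; action $b$ leads deterministically into a different recurrent region $H$ that also sees $p$ infinitely often but via transitions distinct from those in $G$. Since $L$ (``$\Box\Diamond p$'') is prefix-independent, $\racc(\pi_a) = P(\text{reach } G \mid a)$ while $\racc(\pi_b) = 1$. Thus $\pi_b$ is always $L$-optimal when the branch probability into $B$ is positive. The reward function, however, only sees transitions; to have $\pi_b$ be the unique $\rlavg$-optimal policy we need the average reward accumulated in $H$ to strictly exceed the average reward $\pi_a$ accumulates, which is a convex combination (weighted by the unknown branch probabilities) of the averages in $G$ and in $B$. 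Now perturb: make the branch into $B$ have probability $\epsilon \to 0$. Then $\racc(\pi_a) = 1-\epsilon$, still strictly less than $\racc(\pi_b)=1$, so $\pi_b$ must remain the unique $\rlavg$-optimum; but $\rlavg(\pi_a) \to (\text{average reward in } G)$, so we would need the average reward in $H$ to strictly exceed the average reward in $G$. Conversely, I design a second compatible transition function—routing $b$ instead into a region with the \emph{same} transition multiset as $G$ but traversed at different frequencies, or more simply reusing symmetry between the $G$-region and the $H$-region—so that the opposite strict inequality on average rewards is forced. Two incompatible strict inequalities on fixed reward values yield the contradiction.

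The main obstacle is engineering the MDP so that the limit-average rewards genuinely decouple from the $L$-values across the family of compatible $P$'s, while keeping $L$ prefix-independent (which rules out the ``counting'' trick used in Proposition \ref{thm:r1}, since $\Box\Diamond p$ cannot distinguish finite prefixes). The cleanest route is a \emph{symmetry} argument: build the MDP with two structurally isomorphic recurrent components $G$ and $H$ reachable from $s_0$ by $a$ and $b$ respectively, such that swapping the roles of $a$ and $b$ (and correspondingly of $G,H$) is an automorphism of the skeleton and of $L$ but \emph{not} of any candidate reward function unless that reward function assigns equal averages to both components—at which point a third policy (e.g.\ mixing, or a policy that leaves into an absorbing non-$p$ region) becomes $\rlavg$-optimal without being $L$-optimal. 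Making this airtight requires care in checking that the induced Markov chains under each policy are recurrent on the relevant component so that $\rlavg$ equals the stationary average, and in verifying that no pathological reward assignment (e.g.\ all zeros) escapes the argument; handling the all-equal-rewards degenerate case by the extra absorbing region is the detail I expect to need the most attention.
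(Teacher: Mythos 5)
Your overall shape is right---fix one skeleton, take the prefix-independent objective $\Box\Diamond p$, and play different compatible transition functions $P$ against one another---and this matches the paper's strategy at a high level. The gap is that the strict inequalities your argument needs do not actually follow from your construction. In your first family of instances ($a$ branches into $B$ with probability $\epsilon$, $b$ reaches $H$ surely), all you may conclude for each fixed $\epsilon>0$ is that $\pi_a$ is not limit-average optimal, i.e.\ $(1-\epsilon)g+\epsilon\beta< h$ where $g,h,\beta$ are the component averages; letting $\epsilon\to 0$ yields only $h\geq g$, not the strict inequality you assert. The swapped, symmetric family then gives $g\geq h$, so the two families together force only $g=h$ plus $\beta<g$---a perfectly consistent constraint set (take reward $1$ on every transition inside $G$ and $H$ and $0$ elsewhere). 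Hence no contradiction can be extracted from the symmetric design, and your proposed rescue for the degenerate case (``a third policy becomes limit-average optimal without being $L$-optimal'' once $g=h$) is precisely the step that fails: low rewards on the absorbing bad regions keep any such policy strictly suboptimal, and on your skeleton the indicator-of-good-region reward makes the limit-average value of the relevant policies coincide with their acceptance probability, so it survives all the instances your argument deploys.

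What is missing is the dilution (frequency-manipulation) trick that the paper's proof relies on (\cref{fig:counter-2}). There the good behaviour for action $b$ is the cycle $s_0\to s_3\to s_0$, but $s_0$ also carries an \emph{unlabeled} $b$-self-loop taken with unknown probability $p_2$. A degenerate instance ($p_1=p_2=1$) pins down the strict inequality $\mathcal R(s_1,a,s_1)>\mathcal R(s_0,b,s_0)$; then for $p_1,p_2\in(0,1)$ close to $1$ the $L$-optimal behaviour (always $b$, acceptance $1$) spends a $\tfrac{p_2}{2-p_2}$-fraction of its time on that low-reward self-loop, so its limit average is dragged toward $\mathcal R(s_0,b,s_0)$, while $\pi_1$ (acceptance only $p_1<1$) keeps limit average at least $p_1\mathcal R(s_1,a,s_1)$. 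This produces two genuinely strict, incompatible requirements on a single fixed $\mathcal R$. The essential ingredient is that the $L$-optimal policy is forced to spend almost all its time on transitions whose reward a \emph{different} instance has already forced to be strictly smaller than a reward attainable by a non-$L$-optimal policy; with isomorphic components $G$ and $H$ in which the $L$-optimal policy spends all of its time, no such strict gap can be manufactured. Your passing remark about routing $b$ through ``the same transitions traversed at different frequencies'' gestures at exactly this idea, but it is left undeveloped, and the symmetry route you actually pursue cannot be made airtight.
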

    \begin{proof}
  	Our proof technique is based on the fact that we can modify the  transition probability function. Consider the MDP in~\cref{subfig:counter-mdp2}, where the  objective is to visit either $s_1$ or $s_3$ infinitely often. It can be checked that the DRA in~\cref{subfig:counter-dra2} captures the given objective and the language accepted by $\dra$ is prefix-independent.  There are only two deterministic memoryless policies: $\pi_1$, which consistently selects action $a$, and $\pi_2$, which consistently selects action $b$. For the sake of contradiction, let's assume the existence of a reward function $\mathcal{R}$ that preserves optimality for every transition probability function $P$. Pick $p_1 = 1$ and $p_2 = 0$. Then $\vfun{\mdp}{\dra}(\pi_1) =1$ and $\vfun{\mdp}{\dra}(\pi_2) =0$, which implies that $\pi_1$ is $\dra$-optimal whereas $\pi_2$ is not. Thus $\mathcal{R}(s_1,a,s_1) = \vfun{\mdp}{\mathcal{R}^{\textrm{avg}}}(\pi_1)  >\vfun{\mdp}{\mathcal{R}^{\textrm{avg}}}(\pi_2) = \mathcal{R}(s_0,b,s_0)$.  Now, assume $p_1,p_2 \in (0,1)$. Accordingly, we have $\vfun{\mdp}{\mathcal{R}^{\textrm{avg}}}(\pi_1) \geq p_1 \mathcal{R}(s_1,a,s_1)$ and we can deduce that (e.g. by solving the linear equation system described in~\cite[\S 8.2.3]{Puterman:1994}) $\vfun{\mdp}{\mathcal{R}^{\textrm{avg}}}(\pi_2) = \frac{p_2}{2-p_2}\mathcal{R}(s_0,b,s_0) + \frac{1-p_2}{2-p_2}\left(\mathcal{R}(s_0,b,s_3) + \mathcal{R}(s_3,b,s_0)\right)$. As a result:
  	\[
  	\lim_{p_1 \rightarrow 1} \vfun{\mdp}{\mathcal{R}^{\textrm{avg}}}(\pi_1) ~\geq~  \mathcal{R}(s_1,a,s_1) ~>~\mathcal{R}(s_0,b,s_0)  ~=~ \lim_{p_2 \rightarrow 1} \vfun{\mdp}{\mathcal{R}^{\textrm{avg}}}(\pi_2)
  	\]
  	
  	Consequently, if $p_1,p_2$ are sufficiently large then $\vfun{\mdp}{\mathcal{R}^{\textrm{avg}}}(\pi_1) >  \vfun{\mdp}{\mathcal{R}^{\textrm{avg}}}(\pi_2)$. However, this contradicts to the fact that $\pi_2$ is $\dra$-optimal and $\pi_1$ is not, since $\vfun{\mdp}{\dra}(\pi_2) =1 > p_1 = \vfun{\mdp}{\dra}(\pi_1)$. Hence, there is no such reward function $\mathcal{R}$.
  \end{proof}

  
  	\begin{figure}
  	\centering
  	\begin{subfigure}{0.55\linewidth}
  	\centering
  	\begin{tikzpicture}
  		[shorten >=1pt,node distance=2.5cm,on grid,auto]
  		\node[state, initial] (s0) {$s_0$};
  		\node[state,  below of=s0] (s1) {$s_1$};
  		\node[state,  right of=s0] (s3) {$s_3$};
  		\node[state, below of=s3] (s2) {$s_2$};
  		\path[->] 
  		(s1) edge[loop right] node{$a/1$} (s1)
  		(s2) edge[loop right] node{$a/1$} (s2)
  		(s0) edge[loop above] node{$b/p_2$} (s0)
  		(s0) edge[right] node{$a/p_1$} (s1)
  		(s0) edge[right] node{$a/1-p_1$} (s2)
  		(s0) edge[bend left, above] node{$b/1-p_2$} (s3)
  		(s3) edge[bend left, above] node{$b/1$} (s0);
  	\end{tikzpicture}	
  	\caption{An MDP $\mdp$ where $\lfun(s_1,a,s_1) = \lfun(s_3,b,s_0) = \{c\}$, and the rest are labeled with $\emptyset$.}
  	\label{subfig:counter-mdp2}
  	\end{subfigure}
  	\hfill
  	\begin{subfigure}{0.42\linewidth}
  			\centering
  		\begin{tikzpicture}
  			[shorten >=1pt,node distance=2.5cm,on grid,auto]
  			\node[state, initial] (s0) {$q_0$};
  			\node[state,  right of=s0] (s1) {$q_1$};
  			\path[->] 
  			(s0) edge[bend left,above] node{$\{c\}$} (s1)
  			(s0) edge[loop above] node{$\emptyset$} (s0)
  			(s1) edge[loop above] node{$\{c\}$} (s1)
  			(s1) edge[bend left,below] node{$\emptyset$} (s0);
  		\end{tikzpicture}	
  		\caption{A DRA $\dra$ for the objective of visiting $s_1$ or $s_3$ infinitely often where $F\defeq\{(\{q_1\},\emptyset)\}$.}
  		\label{subfig:counter-dra2}
  	\end{subfigure}
  	\caption{Counter-example for prefix-independent objectives.}\label{fig:counter-2}
  \end{figure}
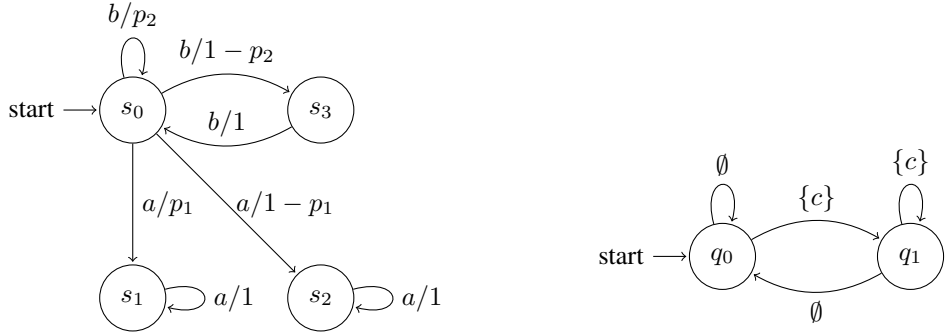

\section{Supplementary Materials for \cref{sec:grey}}
\label{app:grey}

  \begin{lemma}\label{lem:asec}
    Every AEC contains an ASEC. 
\end{lemma}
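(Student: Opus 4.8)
The plan is to extract an ASEC from a given AEC by restricting to a well-chosen memoryless deterministic sub-policy and then passing to a bottom strongly connected component.

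First I would fix an AEC $(T,\act)$ together with a witnessing Rabin pair $(A'_i,B'_i)\in F_\mdp$, so that $T\cap A'_i\neq\emptyset$ and $T\cap B'_i=\emptyset$, and pick some accepting state $v^*\in T\cap A'_i$. Observe that $\act(v)\neq\emptyset$ for every $v\in T$, since otherwise $(T,\rightarrow_{\act})$ could not be strongly connected (for $|T|=1$, a singleton EC must still admit an infinite run, hence a self-loop). Because $(T,\rightarrow_{\act})$ is strongly connected, every $v\in T$ has a path to $v^*$; grading $T$ by distance to $v^*$ in $(T,\rightarrow_{\act})$, I would define for each $v\neq v^*$ an action $\sigma(v)\in\act(v)$ and a successor $v'$ with $\Delta(v,\sigma(v),v')>0$ lying strictly closer to $v^*$, and for $v^*$ any $\sigma(v^*)\in\act(v^*)$. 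By construction, iterating $\sigma$ from any state reaches $v^*$ along the relation $v\rightarrow_\sigma v'$ defined by $\Delta(v,\sigma(v),v')>0$.

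Next I would let $T'$ be the strongly connected component of $v^*$ in the digraph $(T,\rightarrow_\sigma)$ and set $\act'(v)\defeq\{\sigma(v)\}$ for $v\in T'$. The crucial observation is that $T'$ is a \emph{bottom} SCC: if some edge $v\rightarrow_\sigma w$ had $v\in T'$ but $w\notin T'$, then since $w$ can still reach $v^*$ while $v^*$ can reach $v$ and hence $w$, the state $w$ would lie in the same SCC as $v^*$ — contradiction. Hence every $\rightarrow_\sigma$-edge out of $T'$ stays in $T'$, which gives the closure condition $\sum_{v'\in T'}\Delta(v,\sigma(v),v')=\sum_{v'\in T}\Delta(v,\sigma(v),v')=1$ for $v\in T'$; strong connectivity of $(T',\rightarrow_{\act'})$ is immediate since $T'$ is an SCC and $\rightarrow_{\act'}$ agrees with $\rightarrow_\sigma$ on $T'$; and $|\act'(v)|=1$ by construction, so $(T',\act')$ is a simple EC. It is accepting because $v^*\in T'\cap A'_i$ while $T'\subseteq T$ is disjoint from $B'_i$, and it is contained in $(T,\act)$ since $T'\subseteq T$ and $\act'(v)\subseteq\act(v)$.

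I expect the one genuinely delicate point to be ensuring that the component we land in contains an accepting state: an arbitrary memoryless sub-policy of the AEC could well have all its recurrent classes avoiding $A'_i$, so the distance-grading construction of $\sigma$ — which funnels every state to $v^*$ — is exactly what makes the argument go through. Everything else is routine reasoning about finite directed graphs together with the defining properties of end components.
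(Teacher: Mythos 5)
Your proof is correct, but it takes a genuinely different route from the paper. The paper argues by induction on $\mathsf{size}(\mathcal C)=\sum_{v\in T}|\act(v)|$: if some state has two enabled actions, it discards one action that is not needed on a path to an accepting state, prunes the states that become unreachable, checks the result is still an AEC of smaller size, and invokes the induction hypothesis. You instead give a one-shot construction: fix an accepting state $v^*$, choose for every other state an action whose support contains a state strictly closer to $v^*$ (a shortest-path selector), and take the strongly connected component of $v^*$ in the resulting one-action-per-state digraph; the observation that this component is a \emph{bottom} SCC (because every state can still reach $v^*$ under the selector) yields the closure condition $\sum_{v'\in T'}\Delta(v,\sigma(v),v')=1$, and acceptance is immediate since $v^*\in T'\cap A'_i$ and $T'\subseteq T$ avoids $B'_i$. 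Your delicate point is exactly the right one: an arbitrary memoryless restriction could have all its bottom SCCs miss $A'_i$, and the funnelling-towards-$v^*$ choice is what forces the accepting state into the recurrent part. What each approach buys: the paper's induction is compact but hinges on re-verifying strong connectivity after pruning unreachable states at each step, whereas your argument avoids induction altogether and directly gives an efficient (BFS plus SCC) procedure, which is in the spirit of the algorithmic remark the paper makes when reusing this lemma for its polynomial-time construction in the appendix. One cosmetic caveat: the paper's definition of EC does not explicitly require $\act(v)\neq\emptyset$, and your parenthetical convention for singleton ECs (a self-loop must be present) is the same implicit assumption the paper's base case $\mathsf{size}(\mathcal C)\geq 1$ makes, so this is not a gap relative to the paper.
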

\begin{proof}Consider an AEC $\mathcal{C} = (T,\act)$ of $\mdp_{\mathcal{A}}$. We will prove this by using induction on the number of actions in $\mathcal{C}$, denoted as $\mathsf{size}(\mathcal{C})\defi \sum_{s \in T} |\act(s)| \geq 1$. For the base case where $\mathsf{size}(\mathcal{C}) = 1$, it can be deduced that $\mathcal{C}$ consists of only one accepting state with a self-loop. Therefore, $\mathcal{C}$ itself is an ASEC.
    
    Now, let's assume that $\mathsf{size}(\mathcal{C}) = k + 1 \geq 2$. If $\mathcal{C}$ is already an ASEC, then we are done. Otherwise, there exists a state $s \in T$ such that $|\act(s)| > 1$. Since $\mathcal{C}$ is strongly connected, there exists a finite path $\run = s a s_1 a_1 \ldots s_n  a_n s_F$ where $s_F$ is an accepting state and all the states $s_1,\ldots,s_n$  are different from $s$. Let $a' \in \act(s)$ such that $a' \neq a$. We construct a new AEC $\mathcal{C}' = (T',\act')$ by first removing $a'$ from $\act(s)$ and then removing all the states that are no longer reachable from $s$ along with their associated transitions. It is important to note that after the removal, $s_F \in T'$ since we can reach $s_F$ from $s$ without taking the action $a'$. (Besides, the graph is still strongly connected.) Since $\mathsf{size}(\mathcal{C}') \leq k$, we can apply the induction hypothesis to conclude that $\mathcal{C}'$ contains an ASEC, thus completing the proof.
\end{proof}

\asecsuff*

To proof this result, we recall  a well-known result in probabilistic model checking that with probability of one (wpo), every run $\run$ of the policy $\pi$  eventually stays  in one of the ECs of $\mdp_{\mathcal{A}}$ and visits every transition in that EC infinitely often. 
To state this formally, we define for any run $\run=s_0a_0s_1\cdots$,
\begin{align*}
    \inft(\run)\defi\{(s,a)\in S\times A\mid |\{i\in\nat\mid s_i=s\land a_i=a\}|=\infty\}
\end{align*}
the set of state-action-pairs occurring infinitely often in $\run$. Furthermore, a state-action set $\chi\subseteq S\times A$ defines a sub-MDP $\sub(\chi)\defeq(T,\act)$, where
\begin{align*}
    T&\defeq\{s\in S\mid (s,a)\in\chi\text{ for some }a\in A\}&
    \act(s)&\defeq\{a\mid (s,a)\in\chi\}
\end{align*}

\begin{restatable}[\cite{Alfaro:99}]{lemma}{ec}
    \label{lem:ec}$ 
     \prob_{\run \sim \dist^{\prodmdp}_\pi}[\sub(\inft(\run))\text{ is an end component}] = 1$.
 \end{restatable}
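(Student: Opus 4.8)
The plan is to establish the two defining conditions of an end component separately, each almost surely, and then intersect. Fix the policy $\policy$ on $\prodmdp$ and let $(\mathcal F_n)_{n\in\nat}$ be the filtration with $\mathcal F_n\defeq\sigma(v_0,a_0,\dots,v_n,a_n)$ generated by the run $\run=v_0a_0v_1a_1\cdots$. Since $V$ and $A$ are finite and the run is infinite, $\inft(\run)$ is nonempty; moreover, every state--action pair outside $\inft(\run)$ has a last occurrence, so there is a (random) time $N$ after which only pairs in $\inft(\run)$ occur. Write $(T,\act)\defeq\sub(\inft(\run))$; then $T$ is precisely the set of states visited infinitely often, and $\act(v)$ the nonempty set of actions used infinitely often at $v$.

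First I would show \emph{closedness}: almost surely, for all $v\in T$ and $a\in\act(v)$, $\sum_{v'\in T}\Delta(v,a,v')=1$. Since $\sum_{v'\in V}\Delta(v,a,v')=1$, it suffices to prove that, almost surely, whenever $(v,a)$ is taken infinitely often and $\Delta(v,a,v')>0$ the state $v'$ is visited infinitely often; for then finiteness of $A$ and pigeonhole give that some action is used infinitely often at $v'$, i.e.\ $v'\in T$, and a union bound over the finitely many triples $(v,a,v')$ closes this part. Fix such a triple and set $A_n\defeq\{v_n=v,a_n=a\}$ and $B_n\defeq\{v_n=v,a_n=a,v_{n+1}=v'\}$. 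Then $\prob[B_n\mid\mathcal F_n]=\Delta(v,a,v')\cdot\mathbf 1_{A_n}$, so $\sum_n\prob[B_n\mid\mathcal F_n]=\Delta(v,a,v')\sum_n\mathbf 1_{A_n}$ is infinite exactly on $\{A_n\text{ i.o.}\}$. By L\'evy's extension of the second Borel--Cantelli lemma, $\{B_n\text{ i.o.}\}=\{\sum_n\prob[B_n\mid\mathcal F_n]=\infty\}$ almost surely, hence $\{B_n\text{ i.o.}\}=\{A_n\text{ i.o.}\}$ almost surely; and on $\{B_n\text{ i.o.}\}$ the state $v'$ is visited infinitely often, as needed.

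Next I would show \emph{strong connectivity} of $(T,\rightarrow_{\act})$. For every $n\geq N$ the realised transition from $v_n$ to $v_{n+1}$ uses $a_n\in\act(v_n)$ and has $\Delta(v_n,a_n,v_{n+1})>0$ (it actually occurred), so $v_n\rightarrow_{\act}v_{n+1}$; thus the tail $v_Nv_{N+1}\cdots$ is an infinite walk in $(T,\rightarrow_{\act})$ that visits every vertex of $T$ infinitely often. Given arbitrary $u,w\in T$, pick an occurrence of $u$ at some time $m\geq N$ and an occurrence of $w$ at a later time $m'>m$; then $v_mv_{m+1}\cdots v_{m'}$ is a path from $u$ to $w$ in $(T,\rightarrow_{\act})$, so the graph is strongly connected. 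Intersecting the two almost-sure events, $\sub(\inft(\run))$ almost surely satisfies both clauses of the end-component definition, which is the claim.

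The main obstacle is the closedness step: since the policy may depend on the whole history, the events $A_n$ are neither independent nor have directly usable unconditional probabilities, so one must invoke the conditional (martingale) form of the Borel--Cantelli lemma rather than the classical one. A minor point worth stating explicitly is that any transition actually realised along the run has positive probability (its conditional probability given the past is positive), which is what makes the realised edges legitimate edges of $(T,\rightarrow_{\act})$.
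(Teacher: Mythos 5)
Your proof is correct. It reaches the same statement by a somewhat different route than the paper's (de Alfaro's) argument: the paper partitions the run space into the events $\{\run \mid \inft(\run)=\sa(T,\act)\}$, one per sub-MDP $(T,\act)$, shows that every cell whose sub-MDP violates closedness is null via a geometric bound (probability at most $p^k$ for all $k$, where $p<1$ is the probability of remaining in $T$ after the offending state--action pair), and reads off strong connectivity of the remaining cells deterministically from any run in the cell --- essentially your tail-walk argument. You instead work directly with the random sub-MDP $\sub(\inft(\run))$ and establish closedness by a union bound over the finitely many triples $(v,a,v')$ with $\Delta(v,a,v')>0$, using L\'evy's conditional Borel--Cantelli lemma to conclude that if $(v,a)$ recurs infinitely often then so does $v'$. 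The two probabilistic steps encode the same fact (the paper's $p^k$ bound is an implicit conditional second Borel--Cantelli, since the policy is history-dependent and the escape trials are only conditionally independent), but your version avoids the partition over sub-MDPs, makes the martingale-type step explicit via a named theorem, and correctly flags the almost-sure positivity of realised transitions needed to certify the $\rightarrow_{\act}$ edges. Either presentation is acceptable; the paper's is the standard citable one, yours is slightly more self-contained at the probabilistic step.
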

 For the sake of self-containedness, we recall the proof of \cite{Alfaro:99}.
\begin{proof}
    We start with two more definitions: for any sub-MDP $(T,\act)$ \cite{Alfaro:PhD}, let
    \begin{align*}
        \sa(T,\act)\defi\{(s,a)\in T\times A\mid a\in \act(s)\}
    \end{align*}
    be the set of state-action pairs $(s,a)$ such that $a$ is enabled in $s$. Finally, let
    \begin{align*}
        \runssmdp^{(T,\act)}\defi\{\run\in\runs\mid \inft(\run)=\sa(T,\act)\}
    \end{align*}
    be the set of runs such that action $a$ is taken infinitely often in state $s$ iff $s\in T$ and $a\in\act(s)$.
    Note that the $\runssmdp^{(T,\act)}$ constitute a partition of $\runs$.
    
    Therefore, it suffices to establish for any sub-MDP $(T,\act)$, $(T,\act)$ is an end-component or $\prob[\run\in\runssmdp^{(T,\act)}]=0$.

    Let $(T,\act)$ be an arbitrary sub-MDP.
    First, suppose there exist $s\in T$ and $a\in\act(t)$ such that $p\defi\sum_{s'\in T} \trans(t,a,t')<1$.
    By definition each $\run\in\runssmdp^{(T,\act)}$ takes action $a$ in state $s$ infinitely often. Hence, not only $\prob[\run\in\runssmdp^{(T,\act)}]\leq p^k$ for all $k\in\nat$ but also $\prob[\run\in\runssmdp^{(T,\act)}]=0$.

    Thus, we can assume that for all $s\in T$ and $a\in\act(t)$, $\sum_{s'\in T} \trans(t,a,t')=1$. If $\runssmdp^{(T,\act)}=\emptyset$ then clearly $\prob[\run\in\runssmdp^{(T,\act)}]=0$ follows. Otherwise, take any $\run=s_0a_0a_1\cdots\in\runssmdp^{(T,\act)}$, and let $t,t'\in T$ be arbitrary. We show that there exists a connecting path in $(T,\to_{\act})$, which implies that $(T,\act)$ is an end component.

    Evidently, there exists an index $i_0$ such that all state-action pairs occur infinitely often in $\run$, i.e.
    \begin{align*}
        \{(s_{i_0},a_{i_0}),(s_{i_0+1},a_{i_0+1}),\ldots\}=\inft(\run)
    \end{align*}
    Thus, for all $i\geq i_0$, $s_i\in T$ and $a_i\in\act(s_i)$, and for all $i'>i\geq i_0$, there is a path from $s_i$ to $s_{i'}$ in $(T,\to_{\act})$. Finally, it suffices to note that clearly for some $i'>i=i_0$, $s_i=t$ and $s_{i'}=t'$. 
\end{proof}


\begin{proof}[Proof of \cref{lem:asecsuff}]
    By \cref{lem:ec}, almost surely $\sub(\inft(\run))$ is an accepting end component. Clearly, $\run$ is only accepted by the product MDP if this end component is an \emph{accepting} EC. By \cref{lem:asec} this AEC contains an ASEC. Therefore, by definition of $\sub(\inft(\run))$, $\run$ almost surely in particular \emph{enters} some ASEC. Finally, since the $\mathcal C_1,\ldots,\mathcal C_n$ cover all states in ASECs, $\run$ almost surely enters some $\mathcal C_i$.
\end{proof}

\begin{figure}
	\centering
	\begin{tikzpicture}
		[shorten >=1pt,node distance=3cm,on grid,auto]
		\node[state, initial] (s0) {$q_0$};
		\node[state, right=30mm of s0] (s1) {$q_1$};
		\node[state,  right=30mm of s1] (bot) {$\bot$};
		\node[state,  right=30mm of bot] (s2) {$q_2$};
		\path[->] 
		(s0) edge[loop above] node{$(s_0,a,s_0)/0$} (s0)
		(s0) edge[loop below] node{$(s_1,b,s_0)/0$} (s0)
		(s0) edge[above] node{$(s_0,b,s_1)/0$} (s1)
		(s1) edge[loop above] node{$(s_0,a,s_0)/1$} (s1)
		(s1) edge[loop below] node{$(s_1,b,s_0)/0$} (s1)
		(s1) edge[above] node{$(s_0,b,s_1)/{\color{red}1}$} (bot)
		(bot) edge[loop above] node{$*/0$} (bot)
		(s2) edge[loop above] node{$*/0$} (s2);
	\end{tikzpicture}
	\caption{Reward machine yielded by our construction in \cref{sec:grey} for the running example.}
	\label{fig:construction}
\end{figure}
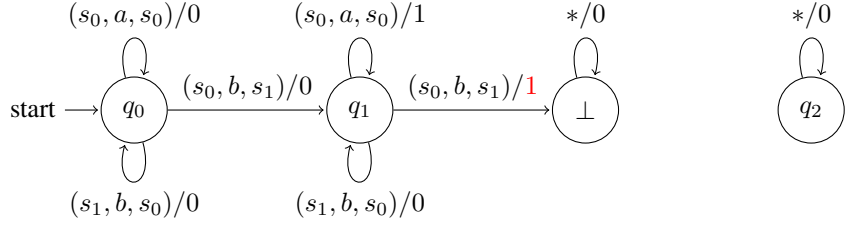

Before turning to the proof of \cref{lem:corgrey}, 
 let $\rlavg(\run)=\liminf_{t\to\infty}\frac{1}{t}\cdot \sum_{i=0}^{t-1}~r_i$ denote the limit-average reward of a run $\run$. Note that, for any run $\run$, $\rlavg(\run)\in\{0,1\}$. Thus, by the dominated convergence theorem \cite[Cor.~6.26]{K13}, 
\begin{align}
	\label{eq:exliminf}
	\prob_{\run\sim\dist_\pi^\mdp}\left[\rlavg(\run)=1\right]~=~\ex_{\run\sim\dist_\pi^\mdp}[\rlavg(\run)]~=~\liminf_{t\to\infty}\ex_{\run\sim\dist_\pi^\mdp}\left[\frac{1}{t}\cdot \sum_{i=0}^{t-1}~r_i\right]~=~\rlavg(\pi)
\end{align}

\corgrey*
\begin{proof}
	\begin{enumerate}
		\item For any run $\run$, $\rlavg(\run)=1$ only if $\run^\otimes$ enters a $\mathcal C_i$ and never leaves it. ($\run^\otimes$ might have entered other $\mathcal C_j$'s earlier but then those necessarily need to overlap with yet another $\mathcal C_k$ such that $i\leq k<j$ to avoid being trapped in state $\bot$, resulting in $\rlavg(\run)=1$. Furthermore, this $\mathcal C_i$ can only overlap with $\mathcal C_j$ if $i<j$. Otherwise, the reward machine would have enforced transitioning to $\mathcal C_j$.) \dw{more elaboration necessary?}

		Since $\mathcal C_i$ is an ASEC, $\run^\otimes$ is accepted by the product MDP $\prodmdp$. Hence, by \cref{eq:prodmdp,eq:exliminf},
		\begin{align*}
			\rlavg(\pi)&~=~\prob_{\run\sim\dist_\pi^\mdp}\left[\rlavg(\run)=1\right]
			~\leq~\prob_{\run\sim\dist_\pi^\mdp}\left[\run^\otimes\text{ accepted by }\prodmdp\right]
			~=~\racc(\pi) 
		\end{align*}

		\item Let $\policy$ be arbitrary. For a run $s_0a_0\cdots$ let $q_t$ be the state of the DRA in step $t$. Define $\policy'$ to follow $\policy$ until reaching $s_t$ such that $(s_t,q_t)\in T_1\cup\cdots \cup T_n$. Henceforth, we select the (unique) action guaranteeing to stay in the $\mathcal C_i$ with minimal $i$ including the current state, i.e. $\act_{(q,u)}(q,u)$. Formally\footnote{We slightly abuse notation in the ``otherwise''-case and denote by $\act_{(s_t,q_t)}(s_t,q_t)$ the distribution selecting the state in the singleton set $\act_{(s_t,q_t)}(s_t,q_t)$ with probability 1.},
		\begin{align}
			\policy'(s_0a_0\cdots s_t)~\defi~
			\begin{cases}
				\policy(s_0a_0\cdots s_t)&\text{if }(s_t,q_t)\not\in T_1 \cup \cdots \cup T_n\\
				\act_{(s_t,q_t)}(s_t,q_t)&\text{otherwise}
			\end{cases}
			\label{eq:surgery}
		\end{align} 
		Note that whenever a run $\run\sim\dist_{\pi'}^\mdp$ follows the modified policy $\pi'$ and its induced run $\run^\otimes$ reaches some ASEC $\mathcal C_i$ then $\rlavg(\run)=1$. Thus, 
		\begin{align*}
			\prob_{\run\sim\dist_{\policy'}^\mdp}[\run^\otimes \text{ reaches some }\mathcal C_i]
			~\leq~\ex_{\run\sim\dist_{\policy'}^\mdp}[\rlavg(\run)]~=~\rlavg(\pi')
		\end{align*}
		Furthermore, by \cref{lem:asecsuff} almost surely, every induced run $\run^\otimes$ accepted by the product MDP must reach some $\mathcal C_i$. Consequently, by \cref{eq:prodmdp},
		\begin{align*}
			\racc(\pi)
			&=\prob_{\run\sim\dist_\policy^\mdp}[\run^\otimes\text{ is accepted by }\prodmdp]\\
			&\leq\prob_{\run\sim\dist_\policy^\mdp}[\run^\otimes\text{ reaches some }\mathcal C_i]\\
			&=\prob_{\run\sim\dist_{\policy'}^\mdp}[\run^\otimes\text{ reaches some }\mathcal C_i]
			\leq\rlavg(\pi')
		\end{align*}
		In the penultimate step, we have exploited the fact that $\pi$ and $\pi'$ agree until reaching the first $\mathcal C_i$.\qedhere
	\end{enumerate}
\end{proof}

\fi

\subsection{Efficient Construction}
\label{sec:eff}

\changed[dw]{
We consider a different collection $\mathcal C_1,\ldots,\mathcal C_n$ of ASECs:
\begin{quote}
	Suppose $\mathcal C'_1,\ldots,\mathcal C'_n$ is a collection of AECs (not necessarily simple ones) containing all states in AECs. Then we consider ASECs $\mathcal C_1,\ldots,\mathcal C_n$ such that $\mathcal C_i$ is contained in $\mathcal C'_i$.
\end{quote}

The definition of the reward machine in \cref{sec:rmc} and the extension in \cref{sec:general} do not need to be changed. Next, we argue the following:
\begin{enumerate}
	\item This collection can be obtained efficiently (in time polynomial in the size of the MDP and DRA).
	\item \cref{lem:corgrey} and hence the correctness result (\cref{thm:grey}) still hold.
\end{enumerate}
For 1. it is well-known that a collection 
 of maximal AECs (covering all states in AECs) can be found efficiently using graph algorithms \cite[Alg. 3.1]{Alfaro:PhD}, \cite{FuT:14,CH11} and \cite[Alg. 47 and Lemma 10.125]{Baier:2008}. Subsequently, \cref{lem:asec} can be used to obtain an ASEC contained in each of them. In particular, note that the proof of \cref{lem:asec} immediately gives rise to an efficient algorithm. (Briefly, we iteratively remove actions and states whilst querying reachability properties.)

For 2., the first part of \cref{lem:corgrey} clearly still holds. For the second, we modify policy $\policy$ as follows: Once, $\policy$ enters a maximal accepting end component we select an action on the shortest path to the respective ASEC $\mathcal C_i$ inside $\mathcal C'_i$. Once we enter one of the $\mathcal C_i$ we follow the actions specified by the ASEC as before. Observe that the probability that under 
 an AEC is entered is the same as the probability that one of the $\mathcal C_i$ is entered under the modified policy. The lemma, hence \cref{thm:grey}, follow.}

\section{Supplementary Materials for \cref{sec:general}}
\label{app:general}

\corblack*
\begin{proof}
	\begin{enumerate}
		\item For a run $\run$, let $E_\run$ be the set of transitions encountered in the product MDP. Note that $\rlavg(\run)=1$ only if $\run^\otimes$ enters some $\mathcal C^{E_\run}_i$ and never leaves it. ($\run^\otimes$ might have entered other $\mathcal C^E_j$s earlier for $E\subseteq E_\run$.) \dw{more elaboration necessary?}

		With probability 1, $E_\run$ contains all the transitions present in $\mathcal C^{E_\run}_i$ in the actual MDP. \dw{clear what is meant?}\lxb{Only true if you consider `normal' $\zeta$ which eventually ends up in some EC} \dw{Yes, but that happens with probability 1. Do you agree?} (NB possible transitions outside of $\mathcal C^{E_\run}_i$ might be missing from $E_\run$.)
		In particular, with probability 1, $\mathcal C^{E_\run}_i$ is also an ASEC for the true unknown MDP and $\run^\otimes$ is accepted by the product MDP $\prodmdp$. Consequently, using \cref{eq:exliminf} again,
			\begin{align*}
				\rlavg(\pi)&=\prob_{\run\sim\dist_\pi^\mdp}[\rlavg(\run)=1]
				\leq\prob_{\run\sim\dist_\pi^\mdp}[\run^\otimes\text{ accepted by }\prodmdp]
				=\racc(\pi)
			\end{align*}

		\item Let $\policy$ be arbitrary. We modify $\policy$ to $\policy'$ as follows:
		until reaching an ASEC $\mathcal C=(T,\act)$ w.r.t.\ the true, unknown\footnote{NB The modified policy depends on the true, unknown $E^*$; we only claim the \emph{existence} of such a policy.} set of transitions $E^*$ follow $\pi$. Henceforth, select action $\act^{E^*}_{(s_t,q_t)}(s_t,q_t)$.

		We claim that whenever $\run\sim\dist_{\pi'}^\mdp$ follows the modified policy $\policy'$ and $\run^\otimes$ reaches some ASEC in the true product MDP, $\rlavg(\run)=1$. \dw{elaborate}
	
		To see this, suppose $\run\sim\dist_{\pi'}^\mdp$ is such that for some minimal $t_0\in\nat$, $(s_{t_0},q_{t_0})\in T_1^{E^*}\cup\cdots\cup T_n^{E^*}$. Let $\mathcal C=(T,\act)\defeq\mathcal C^{E^*}_{(s_{t_0},q_{t_0})}$.
		
		Define $E_t$ to be the transitions encountered up to step $t\in\nat$, i.e.\ $E_t\defeq\{((s_k,q_k),a_k,(s_{k+1},q_{k+1}))\mid 0\leq k<t\}$.
		Then almost surely for some minimal $t\geq{t_0}$, $E_{t}$ contains all transitions in $\mathcal C$, and no further transitions will be encountered, i.e.\ for all $t'\geq t$, $E_{t'}=E_{t}$. Define $\overline E\defeq E_{t}$. \dw{ASECs don't contain more ASECs} Note that for all $((s,q),a,(s',q'))\in\overline E$ such that $(s,q)\in T$, $\act(s,q)=\{a\}$. (This is because upon entering the ASEC $\mathcal C$ we immediately switch to following the action dictated by $\act$. Thus, we avoid ``accidentally'' discovering other ASECs w.r.t.\ the partial knowledge of the product MDP's graph, which might otherwise force us to perform actions leaving $\mathcal C$.) Consequently, there cannot be another ASEC $\mathcal C'=(T',\act')$ w.r.t.\ $\overline E$ overlapping with $\mathcal C$, i.e.\ $T\cap T'\neq\emptyset$\lxb{I think you can make this statement stronger by saying that $\overline E$ does not contain any other ASEC beside $\mathcal{C}$}. Therefore, for all $(s,q)\in\mathcal C$, $\act_{(s,q)}^{\overline E}=\act$. Consequently, $\rlavg(\run)=1$. \dw{Is this convincing?}

		 Thus, 
		\begin{align*}
			\prob_{\run\sim\dist_{\policy'}^\mdp}[\run^\otimes \text{ reaches some ASEC in true product MDP}]
			\leq\ex_{\run\sim\dist_{\policy'}^\mdp}[\rlavg(\run)]=\rlavg(\pi')
		\end{align*}
		 Consequently,
		\begin{align*}
			\racc(\pi)
			&=\prob_{\run\sim\dist_\policy^\mdp}[\run^\otimes\text{ is accepted by }\prodmdp]\\
			&\leq\prob_{\run\sim\dist_\policy^\mdp}[\run^\otimes\text{ reaches some ASEC in true product MDP}]\\
			&=\prob_{\run\sim\dist_{\policy'}^\mdp}[\run^\otimes\text{ reaches some ASEC in true product MDP}]
			\leq\rlavg(\pi')
		\end{align*}
		In the penultimate step we have exploited that $\pi$ and $\pi'$ agree until reaching some ASEC in true product MDP.\qedhere
	\end{enumerate}
\end{proof}

\section{Supplementary Materials for \cref{sec:lim-avg}}
\label{app:lim-avg}
\dw{Notation:} Let $\Pi$ be the set of all memoryless policies and $\Pi^*$ be the set of all limit-average optimal policies. Besides, let $w^*\defeq \rlavg(\policy^*)$ the limit average reward of any optimal $\policy^*\in\Pi^*$.

\cref{lem:finconv} is proven completely analagously to the following (where $K=\nat$):
\begin{lemma}
    \label{lem:approx}
    Suppose $\gamma_k\nearrow 1$, $\epsilon_k\searrow 0$ and each $\policy_k$ is a memoryless policy satisfying $\rdisc{\gamma_k}(\policy_k)\geq \rdisc{\gamma_k}(\policy)-\epsilon_k$ for all $\policy\in\Pi$. Then there exists $k_0$ such that for all $k\geq k_0$, $\policy_k$ is limit average optimal.
\end{lemma}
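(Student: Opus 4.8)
The plan is to exploit a \emph{uniform gap}: for discount factors close to $1$, the $\gamma$-discounted value of any gain-suboptimal deterministic memoryless policy lies below the optimum $v^\star_\gamma\defeq\sup_{\policy}\rdisc\gamma(\policy)$ by an amount that grows like $\tfrac{\eta}{1-\gamma}\to\infty$; hence once $\epsilon_k<1$ and $\gamma_k$ is large enough, the hypothesis forces $\policy_k$ to be gain-optimal, which for a fixed memoryless policy is the same as being limit-average optimal.

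I would first reduce to the \emph{finite} set $\Pi_d\subseteq\Pi$ of deterministic memoryless policies. Since stationary deterministic policies attain both the discounted and the average optimum in a finite MDP, $v^\star_\gamma=\max_{\pi\in\Pi_d}\rdisc\gamma(\pi)$ and $w^*=\max_{\pi\in\Pi_d}\rlavg(\pi)$, so the hypothesis becomes $\rdisc{\gamma_k}(\policy_k)\ge v^\star_{\gamma_k}-\epsilon_k$. (Here I use that each $\policy_k\in\Pi_d$ — the case relevant to \cref{alg:avg}, whose discount-sum PAC subroutine returns deterministic policies; finiteness of $\Pi_d$ is essential, see below.) Next, by the standard vanishing-discount relation, for each fixed $\pi\in\Pi_d$ we have $(1-\gamma)\,\rdisc\gamma(\pi)\to\rlavg(\pi)$ as $\gamma\nearrow1$ (the policy induces a fixed finite Markov chain, so the Ces\`aro averages of its per-step rewards converge, and Ces\`aro convergence implies Abel convergence to the same limit); taking the maximum over the finite set $\Pi_d$ gives $(1-\gamma)\,v^\star_\gamma\to w^*$ as well. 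This is the quantitative counterpart of the Blackwell-optimality facts recalled above.

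Now set $\Pi_d^{<}\defeq\{\pi\in\Pi_d:\rlavg(\pi)<w^*\}$. If $\Pi_d^{<}=\emptyset$, every policy in $\Pi_d$ — in particular $\policy_k$ — is already limit-average optimal and we are done. Otherwise let $\eta\defeq w^*-\max_{\pi\in\Pi_d^{<}}\rlavg(\pi)>0$. For each $\pi\in\Pi_d^{<}$ the two limits above yield $(1-\gamma)\bigl(v^\star_\gamma-\rdisc\gamma(\pi)\bigr)\to w^*-\rlavg(\pi)\ge\eta$, hence $v^\star_\gamma-\rdisc\gamma(\pi)\to+\infty$ and there is $\gamma_\pi<1$ with $v^\star_\gamma-\rdisc\gamma(\pi)>1$ for all $\gamma\in[\gamma_\pi,1)$; put $\gamma^\dagger\defeq\max_{\pi\in\Pi_d^{<}}\gamma_\pi<1$. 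Finally choose $k_0$ so that $\gamma_k\ge\gamma^\dagger$ and $\epsilon_k<1$ for all $k\ge k_0$, possible since $\gamma_k\nearrow1$ and $\epsilon_k\searrow0$. For such $k$, $\rdisc{\gamma_k}(\policy_k)\ge v^\star_{\gamma_k}-\epsilon_k>v^\star_{\gamma_k}-1$ is incompatible with $\policy_k\in\Pi_d^{<}$, so $\rlavg(\policy_k)=w^*$, i.e.\ $\policy_k$ is limit-average optimal.

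The main obstacle is making the gap uniform over candidate policies: a single $\eta>0$ and a single $\gamma^\dagger<1$ work precisely because $\Pi_d$ is finite. This genuinely fails for arbitrary \emph{randomized} memoryless policies — e.g.\ a two-action gadget where a vanishing mixing weight $p_k\to0^+$ makes the discounted value within $\epsilon_k$ of the optimum yet keeps the limit-average value at $1-p_k<w^*$ — so the reduction to $\Pi_d$ is substantive rather than cosmetic. The remaining ingredients (existence of stationary deterministic optima for the discounted and average criteria, and Ces\`aro/Abel convergence of the per-step rewards) are standard; \cref{lem:finconv} follows by the identical argument after intersecting the index set with $K$.
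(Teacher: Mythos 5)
Your proof is correct for the case you treat, and it takes a somewhat different route from the paper. The paper fixes a memoryless Blackwell-optimal reference policy $\policy^*$ (citing \cite{B62,GP23,HY02}), defines the average-reward gap $\Delta$ over the finite policy set, and uses the uniform approximation $|(1-\gamma)\rdisc\gamma(\policy)-\rlavg(\policy)|\le\Delta/4$ together with $\gamma_k$-discount optimality of $\policy^*$ to bound $|\rlavg(\policy_k)-w^*|$ strictly below $\Delta$. You avoid Blackwell optimality altogether: from $(1-\gamma)\bigl(v^\star_\gamma-\rdisc\gamma(\pi)\bigr)\to w^*-\rlavg(\pi)\ge\eta$ you get that the discounted gap of every average-suboptimal deterministic stationary policy diverges like $\eta/(1-\gamma)$, so $\epsilon_k$-discount-optimality eventually excludes all of them. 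Both arguments hinge on the same two ingredients (finiteness of the deterministic stationary class and the Abel/vanishing-discount relation), but your contradiction-via-diverging-gap needs only $\epsilon_k$ bounded rather than $\epsilon_k\searrow 0$, which is a mild strengthening, while the paper's version delivers a quantitative closeness statement $|\rlavg(\policy_k)-w^*|<\Delta$ along the way.

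One point deserves emphasis. You prove the lemma only for deterministic $\policy_k$, whereas the statement says ``memoryless policy,'' which by the paper's definition allows randomization. Your single-state two-action counterexample (take $p_k=\epsilon_k(1-\gamma_k)/2$) is correct: a randomized stationary policy can be $\epsilon_k$-discount-optimal for every $k$ yet never limit-average optimal, so the statement is genuinely false under the randomized reading. This is not a defect of your proposal relative to the paper: the paper's own proof asserts ``Since $\Pi$ is finite,'' and defines $\Delta$ as a minimum over $\Pi\setminus\Pi^*$, both of which only make sense when $\Pi$ is the (finite) set of \emph{deterministic} memoryless policies. So your restriction coincides with what the paper's argument implicitly assumes (and suffices for the application in \cref{alg:avg} and \cref{lem:finconv}); your counterexample in fact pinpoints an imprecision in the paper's phrasing rather than a gap in your own argument.
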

\begin{proof}
    We define $\Delta\defeq\min_{\policy\in\Pi\setminus\Pi^*} \rlavg(\policy)-w^*>0$. 
    Recall (see e.g.\ \cite[Sec.~8.1]{HY02}) that for any policy $\policy\in\Pi$,
\begin{align}
    \label{eq:discavg}
    \lim_{\gamma\nearrow 1}(1-\gamma)\cdot \rdisc\gamma(\policy)=\rlavg(\policy)
\end{align}
Since $\Pi$ is finite, due to \cref{eq:discavg} there exists $\gamma_0$ such that
    \begin{align}
        \label{eq:discapp}
        |\rlavg(\policy)-(1-\gamma)\cdot \rdisc\gamma(\policy)|\leq\frac\Delta 4
    \end{align}
    for all $\policy\in\Pi$ and $\gamma\in[\gamma_0,1)$. 
    Let $\policy^*$ be a memoryless Blackwell optimal policy (which exists due to \cite{B62,GP23}). Note that 
    \begin{align}
        \label{eq:optpo}
        w^*&=\rlavg(\policy^*)
    \end{align}
    and there exists $\overline\gamma\in[0,1)$ such that 
    \begin{align}
        \label{eq:optdisc}
        \rdisc\gamma(\policy^*)\geq \rdisc\gamma(\policy)
    \end{align}
    for all $\gamma\in[\overline\gamma,1)$ and $\policy\in\Pi$.
    Moreover, there clearly exists $k_0$ such that $\epsilon_k\leq\Delta/4$ and $\gamma_k\geq\gamma_0,\overline\gamma$ for all $k\geq k_0$. 
    
    Therefore, for any $k\geq k_0$,
    \begin{align*}
        |\rlavg(\policy_k)-w^*|&\leq (1-\gamma_k)\cdot \left|\rdisc{\gamma_k}(\policy_k)-\rdisc{\gamma_k}(\policy^*)\right|+\frac\Delta 2&\text{\cref{eq:discapp,eq:optpo}}\\
        &\leq (1-\gamma_k)\cdot \epsilon_k+\frac\Delta 2&\text{premise and \cref{eq:optdisc}}\\
        &\leq\frac 4 3\cdot\Delta
    \end{align*}
    Consequently, by definition of $\Delta$, $\pi_k\in\Pi^*$.
\end{proof}



\newpage
\section*{NeurIPS Paper Checklist}

\begin{enumerate}
	
	\item {\bf Claims}
	\item[] Question: Do the main claims made in the abstract and introduction accurately reflect the paper's contributions and scope?
	\item[] Answer: \answerYes{} 
	\item[] Justification: The main results mentioned in the abstract and introduction are \cref{thm:r1,thm:main,thm:avgconv,thm:ltlconv}. They accurately reflect the paper's contributions and scope.
	\item[] Guidelines:
	\begin{itemize}
		\item The answer NA means that the abstract and introduction do not include the claims made in the paper.
		\item The abstract and/or introduction should clearly state the claims made, including the contributions made in the paper and important assumptions and limitations. A No or NA answer to this question will not be perceived well by the reviewers. 
		\item The claims made should match theoretical and experimental results, and reflect how much the results can be expected to generalize to other settings. 
		\item It is fine to include aspirational goals as motivation as long as it is clear that these goals are not attained by the paper. 
	\end{itemize}
	
	\item {\bf Limitations}
	\item[] Question: Does the paper discuss the limitations of the work performed by the authors?
	\item[] Answer: \answerYes{} 
	\item[] Justification: Limitations are discussed in \cref{sec:related-work}. \dw{todo: elaborate}
	\item[] Guidelines:
	\begin{itemize}
		\item The answer NA means that the paper has no limitation while the answer No means that the paper has limitations, but those are not discussed in the paper. 
		\item The authors are encouraged to create a separate "Limitations" section in their paper.
		\item The paper should point out any strong assumptions and how robust the results are to violations of these assumptions (e.g., independence assumptions, noiseless settings, model well-specification, asymptotic approximations only holding locally). The authors should reflect on how these assumptions might be violated in practice and what the implications would be.
		\item The authors should reflect on the scope of the claims made, e.g., if the approach was only tested on a few datasets or with a few runs. In general, empirical results often depend on implicit assumptions, which should be articulated.
		\item The authors should reflect on the factors that influence the performance of the approach. For example, a facial recognition algorithm may perform poorly when image resolution is low or images are taken in low lighting. Or a speech-to-text system might not be used reliably to provide closed captions for online lectures because it fails to handle technical jargon.
		\item The authors should discuss the computational efficiency of the proposed algorithms and how they scale with dataset size.
		\item If applicable, the authors should discuss possible limitations of their approach to address problems of privacy and fairness.
		\item While the authors might fear that complete honesty about limitations might be used by reviewers as grounds for rejection, a worse outcome might be that reviewers discover limitations that aren't acknowledged in the paper. The authors should use their best judgment and recognize that individual actions in favor of transparency play an important role in developing norms that preserve the integrity of the community. Reviewers will be specifically instructed to not penalize honesty concerning limitations.
	\end{itemize}
	
	\item {\bf Theory Assumptions and Proofs}
	\item[] Question: For each theoretical result, does the paper provide the full set of assumptions and a complete (and correct) proof?
	\item[] Answer: \answerYes{} 
	\item[] Justification: Full proofs are presented in the appendices and results are cross-referenced. At the beginning of \cref{sec:grey} we assume knowledge of the support of the MDP's probability transition function for presentational purposes. This assumption is fully removed in \cref{sec:general}.
	\item[] Guidelines:
	\begin{itemize}
		\item The answer NA means that the paper does not include theoretical results. 
		\item All the theorems, formulas, and proofs in the paper should be numbered and cross-referenced.
		\item All assumptions should be clearly stated or referenced in the statement of any theorems.
		\item The proofs can either appear in the main paper or the supplemental material, but if they appear in the supplemental material, the authors are encouraged to provide a short proof sketch to provide intuition. 
		\item Inversely, any informal proof provided in the core of the paper should be complemented by formal proofs provided in appendix or supplemental material.
		\item Theorems and Lemmas that the proof relies upon should be properly referenced. 
	\end{itemize}
	
	\item {\bf Experimental Result Reproducibility}
	\item[] Question: Does the paper fully disclose all the information needed to reproduce the main experimental results of the paper to the extent that it affects the main claims and/or conclusions of the paper (regardless of whether the code and data are provided or not)?
	\item[] Answer: \answerNA{} 
	\item[] Justification: The paper does not include experiments.
	\item[] Guidelines:
	\begin{itemize}
		\item The answer NA means that the paper does not include experiments.
		\item If the paper includes experiments, a No answer to this question will not be perceived well by the reviewers: Making the paper reproducible is important, regardless of whether the code and data are provided or not.
		\item If the contribution is a dataset and/or model, the authors should describe the steps taken to make their results reproducible or verifiable. 
		\item Depending on the contribution, reproducibility can be accomplished in various ways. For example, if the contribution is a novel architecture, describing the architecture fully might suffice, or if the contribution is a specific model and empirical evaluation, it may be necessary to either make it possible for others to replicate the model with the same dataset, or provide access to the model. In general. releasing code and data is often one good way to accomplish this, but reproducibility can also be provided via detailed instructions for how to replicate the results, access to a hosted model (e.g., in the case of a large language model), releasing of a model checkpoint, or other means that are appropriate to the research performed.
		\item While NeurIPS does not require releasing code, the conference does require all submissions to provide some reasonable avenue for reproducibility, which may depend on the nature of the contribution. For example
		\begin{enumerate}
			\item If the contribution is primarily a new algorithm, the paper should make it clear how to reproduce that algorithm.
			\item If the contribution is primarily a new model architecture, the paper should describe the architecture clearly and fully.
			\item If the contribution is a new model (e.g., a large language model), then there should either be a way to access this model for reproducing the results or a way to reproduce the model (e.g., with an open-source dataset or instructions for how to construct the dataset).
			\item We recognize that reproducibility may be tricky in some cases, in which case authors are welcome to describe the particular way they provide for reproducibility. In the case of closed-source models, it may be that access to the model is limited in some way (e.g., to registered users), but it should be possible for other researchers to have some path to reproducing or verifying the results.
		\end{enumerate}
	\end{itemize}

	\item {\bf Open access to data and code}
	\item[] Question: Does the paper provide open access to the data and code, with sufficient instructions to faithfully reproduce the main experimental results, as described in supplemental material?
	\item[] Answer: \answerNA{} 
	\item[] Justification: The paper does not include experiments.
	\item[] Guidelines:
	\begin{itemize}
		\item The answer NA means that paper does not include experiments requiring code.
		\item Please see the NeurIPS code and data submission guidelines (\url{https://nips.cc/public/guides/CodeSubmissionPolicy}) for more details.
		\item While we encourage the release of code and data, we understand that this might not be possible, so “No” is an acceptable answer. Papers cannot be rejected simply for not including code, unless this is central to the contribution (e.g., for a new open-source benchmark).
		\item The instructions should contain the exact command and environment needed to run to reproduce the results. See the NeurIPS code and data submission guidelines (\url{https://nips.cc/public/guides/CodeSubmissionPolicy}) for more details.
		\item The authors should provide instructions on data access and preparation, including how to access the raw data, preprocessed data, intermediate data, and generated data, etc.
		\item The authors should provide scripts to reproduce all experimental results for the new proposed method and baselines. If only a subset of experiments are reproducible, they should state which ones are omitted from the script and why.
		\item At submission time, to preserve anonymity, the authors should release anonymized versions (if applicable).
		\item Providing as much information as possible in supplemental material (appended to the paper) is recommended, but including URLs to data and code is permitted.
	\end{itemize}

	\item {\bf Experimental Setting/Details}
	\item[] Question: Does the paper specify all the training and test details (e.g., data splits, hyperparameters, how they were chosen, type of optimizer, etc.) necessary to understand the results?
	\item[] Answer: \answerNA{} 
	\item[] Justification: The paper does not include experiments.
	\item[] Guidelines:
	\begin{itemize}
		\item The answer NA means that the paper does not include experiments.
		\item The experimental setting should be presented in the core of the paper to a level of detail that is necessary to appreciate the results and make sense of them.
		\item The full details can be provided either with the code, in appendix, or as supplemental material.
	\end{itemize}
	
	\item {\bf Experiment Statistical Significance}
	\item[] Question: Does the paper report error bars suitably and correctly defined or other appropriate information about the statistical significance of the experiments?
	\item[] Answer: \answerNA{} 
	\item[] Justification: The paper does not include experiments.
	\item[] Guidelines:
	\begin{itemize}
		\item The answer NA means that the paper does not include experiments.
		\item The authors should answer "Yes" if the results are accompanied by error bars, confidence intervals, or statistical significance tests, at least for the experiments that support the main claims of the paper.
		\item The factors of variability that the error bars are capturing should be clearly stated (for example, train/test split, initialization, random drawing of some parameter, or overall run with given experimental conditions).
		\item The method for calculating the error bars should be explained (closed form formula, call to a library function, bootstrap, etc.)
		\item The assumptions made should be given (e.g., Normally distributed errors).
		\item It should be clear whether the error bar is the standard deviation or the standard error of the mean.
		\item It is OK to report 1-sigma error bars, but one should state it. The authors should preferably report a 2-sigma error bar than state that they have a 96\% CI, if the hypothesis of Normality of errors is not verified.
		\item For asymmetric distributions, the authors should be careful not to show in tables or figures symmetric error bars that would yield results that are out of range (e.g. negative error rates).
		\item If error bars are reported in tables or plots, The authors should explain in the text how they were calculated and reference the corresponding figures or tables in the text.
	\end{itemize}
	
	\item {\bf Experiments Compute Resources}
	\item[] Question: For each experiment, does the paper provide sufficient information on the computer resources (type of compute workers, memory, time of execution) needed to reproduce the experiments?
	\item[] Answer: \answerNA{} 
	\item[] Justification: The paper does not include experiments.
	\item[] Guidelines:
	\begin{itemize}
		\item The answer NA means that the paper does not include experiments.
		\item The paper should indicate the type of compute workers CPU or GPU, internal cluster, or cloud provider, including relevant memory and storage.
		\item The paper should provide the amount of compute required for each of the individual experimental runs as well as estimate the total compute. 
		\item The paper should disclose whether the full research project required more compute than the experiments reported in the paper (e.g., preliminary or failed experiments that didn't make it into the paper). 
	\end{itemize}
	
	\item {\bf Code Of Ethics}
	\item[] Question: Does the research conducted in the paper conform, in every respect, with the NeurIPS Code of Ethics \url{https://neurips.cc/public/EthicsGuidelines}?
	\item[] Answer: \answerYes{} 
	\item[] Justification: The research conducted in the paper conform, in every respect, with the NeurIPS Code of Ethics \url{https://neurips.cc/public/EthicsGuidelines}.
	\item[] Guidelines:
	\begin{itemize}
		\item The answer NA means that the authors have not reviewed the NeurIPS Code of Ethics.
		\item If the authors answer No, they should explain the special circumstances that require a deviation from the Code of Ethics.
		\item The authors should make sure to preserve anonymity (e.g., if there is a special consideration due to laws or regulations in their jurisdiction).
	\end{itemize}

	\item {\bf Broader Impacts}
	\item[] Question: Does the paper discuss both potential positive societal impacts and negative societal impacts of the work performed?
	\item[] Answer: \answerNA{} 
	\item[] Justification: There is no societal impact of the work performed.
	\item[] Guidelines:
	\begin{itemize}
		\item The answer NA means that there is no societal impact of the work performed.
		\item If the authors answer NA or No, they should explain why their work has no societal impact or why the paper does not address societal impact.
		\item Examples of negative societal impacts include potential malicious or unintended uses (e.g., disinformation, generating fake profiles, surveillance), fairness considerations (e.g., deployment of technologies that could make decisions that unfairly impact specific groups), privacy considerations, and security considerations.
		\item The conference expects that many papers will be foundational research and not tied to particular applications, let alone deployments. However, if there is a direct path to any negative applications, the authors should point it out. For example, it is legitimate to point out that an improvement in the quality of generative models could be used to generate deepfakes for disinformation. On the other hand, it is not needed to point out that a generic algorithm for optimizing neural networks could enable people to train models that generate Deepfakes faster.
		\item The authors should consider possible harms that could arise when the technology is being used as intended and functioning correctly, harms that could arise when the technology is being used as intended but gives incorrect results, and harms following from (intentional or unintentional) misuse of the technology.
		\item If there are negative societal impacts, the authors could also discuss possible mitigation strategies (e.g., gated release of models, providing defenses in addition to attacks, mechanisms for monitoring misuse, mechanisms to monitor how a system learns from feedback over time, improving the efficiency and accessibility of ML).
	\end{itemize}
	
	\item {\bf Safeguards}
	\item[] Question: Does the paper describe safeguards that have been put in place for responsible release of data or models that have a high risk for misuse (e.g., pretrained language models, image generators, or scraped datasets)?
	\item[] Answer: \answerNA{} 
	\item[] Justification: The paper poses no such risks.
	\item[] Guidelines:
	\begin{itemize}
		\item The answer NA means that the paper poses no such risks.
		\item Released models that have a high risk for misuse or dual-use should be released with necessary safeguards to allow for controlled use of the model, for example by requiring that users adhere to usage guidelines or restrictions to access the model or implementing safety filters. 
		\item Datasets that have been scraped from the Internet could pose safety risks. The authors should describe how they avoided releasing unsafe images.
		\item We recognize that providing effective safeguards is challenging, and many papers do not require this, but we encourage authors to take this into account and make a best faith effort.
	\end{itemize}
	
	\item {\bf Licenses for existing assets}
	\item[] Question: Are the creators or original owners of assets (e.g., code, data, models), used in the paper, properly credited and are the license and terms of use explicitly mentioned and properly respected?
	\item[] Answer: \answerNA{} 
	\item[] Justification: The paper does not use existing assets.
	\item[] Guidelines:
	\begin{itemize}
		\item The answer NA means that the paper does not use existing assets.
		\item The authors should cite the original paper that produced the code package or dataset.
		\item The authors should state which version of the asset is used and, if possible, include a URL.
		\item The name of the license (e.g., CC-BY 4.0) should be included for each asset.
		\item For scraped data from a particular source (e.g., website), the copyright and terms of service of that source should be provided.
		\item If assets are released, the license, copyright information, and terms of use in the package should be provided. For popular datasets, \url{paperswithcode.com/datasets} has curated licenses for some datasets. Their licensing guide can help determine the license of a dataset.
		\item For existing datasets that are re-packaged, both the original license and the license of the derived asset (if it has changed) should be provided.
		\item If this information is not available online, the authors are encouraged to reach out to the asset's creators.
	\end{itemize}
	
	\item {\bf New Assets}
	\item[] Question: Are new assets introduced in the paper well documented and is the documentation provided alongside the assets?
	\item[] Answer: \answerNA{} 
	\item[] Justification: The paper does not release new assets.
	\item[] Guidelines:
	\begin{itemize}
		\item The answer NA means that the paper does not release new assets.
		\item Researchers should communicate the details of the dataset/code/model as part of their submissions via structured templates. This includes details about training, license, limitations, etc. 
		\item The paper should discuss whether and how consent was obtained from people whose asset is used.
		\item At submission time, remember to anonymize your assets (if applicable). You can either create an anonymized URL or include an anonymized zip file.
	\end{itemize}
	
	\item {\bf Crowdsourcing and Research with Human Subjects}
	\item[] Question: For crowdsourcing experiments and research with human subjects, does the paper include the full text of instructions given to participants and screenshots, if applicable, as well as details about compensation (if any)? 
	\item[] Answer: \answerNA{} 
	\item[] Justification: The paper does not involve crowdsourcing nor research with human subjects.
	\item[] Guidelines:
	\begin{itemize}
		\item The answer NA means that the paper does not involve crowdsourcing nor research with human subjects.
		\item Including this information in the supplemental material is fine, but if the main contribution of the paper involves human subjects, then as much detail as possible should be included in the main paper. 
		\item According to the NeurIPS Code of Ethics, workers involved in data collection, curation, or other labor should be paid at least the minimum wage in the country of the data collector. 
	\end{itemize}
	
	\item {\bf Institutional Review Board (IRB) Approvals or Equivalent for Research with Human Subjects}
	\item[] Question: Does the paper describe potential risks incurred by study participants, whether such risks were disclosed to the subjects, and whether Institutional Review Board (IRB) approvals (or an equivalent approval/review based on the requirements of your country or institution) were obtained?
	\item[] Answer: \answerNA{} 
	\item[] Justification: The paper does not involve crowdsourcing nor research with human subjects.
	\item[] Guidelines:
	\begin{itemize}
		\item The answer NA means that the paper does not involve crowdsourcing nor research with human subjects.
		\item Depending on the country in which research is conducted, IRB approval (or equivalent) may be required for any human subjects research. If you obtained IRB approval, you should clearly state this in the paper. 
		\item We recognize that the procedures for this may vary significantly between institutions and locations, and we expect authors to adhere to the NeurIPS Code of Ethics and the guidelines for their institution. 
		\item For initial submissions, do not include any information that would break anonymity (if applicable), such as the institution conducting the review.
	\end{itemize}
	
\end{enumerate}

\end{document}